\documentclass{article}

\usepackage[final,nonatbib]{neurips_2020}

\usepackage[utf8]{inputenc}
\usepackage[T1]{fontenc}
\usepackage{hyperref}
\usepackage{url}
\usepackage{booktabs}
\usepackage{amsfonts}
\usepackage{nicefrac}
\usepackage{microtype}

\usepackage{xcolor}
\usepackage{multirow}
\usepackage[pdftex]{graphicx}
\usepackage[ruled,vlined,linesnumbered]{algorithm2e}
\usepackage{amsthm}
\usepackage{enumitem}

\title{Strongly Incremental Constituency Parsing with Graph Neural Networks}

\author{
  Kaiyu Yang \\
  Princeton University\\
  \texttt{kaiyuy@cs.princeton.edu} \\
  \And
  Jia Deng \\
  Princeton University\\
  \texttt{jiadeng@cs.princeton.edu} \\
}

\newcommand{\smallsec}[1]{\vspace{-0.00in} \paragraph {#1}}
\newcommand\kaiyu[1]{\textcolor{red}{(Kaiyu: #1)}}

\newcommand\tbd{\textcolor{red}{TBD }}
\newcommand\tentative[1]{\textcolor{blue}{#1}}

\begin{document}

\maketitle

\begin{abstract}
Parsing sentences into syntax trees can benefit downstream applications in NLP. Transition-based parsers build trees by executing actions in a state transition system. They are computationally efficient, and can leverage machine learning to predict actions based on partial trees. However, existing transition-based parsers are predominantly based on the shift-reduce transition system,  which does not align with how humans are known to parse sentences.  Psycholinguistic research suggests that human parsing is \emph{strongly incremental}---humans grow a single parse tree by adding exactly one token at each step. In this paper, we propose a novel transition system called \emph{attach-juxtapose}. It is strongly incremental; it represents a partial sentence using a single tree; each action adds exactly one token into the partial tree. Based on our transition system, we develop a strongly incremental parser. At each step, it encodes the partial tree using a graph neural network and predicts an action. We evaluate our parser on  Penn Treebank (PTB) and Chinese Treebank (CTB). On PTB, it outperforms existing parsers trained with only constituency trees; and it performs on par with state-of-the-art parsers that use dependency trees as additional training data. On CTB, our parser establishes a new state of the art. Code is available at \url{https://github.com/princeton-vl/attach-juxtapose-parser}.
\end{abstract}

\section{Introduction}
Constituency parsing is a core task in natural language processing. It recovers the syntactic structures of sentences as trees (Fig.~\ref{fig:trees}). State-of-the-art parsers are based on deep neural networks and typically consist of an encoder and a decoder. The encoder embeds input tokens into vectors, from which the decoder generates a parse tree. A main class of decoders builds trees by executing a sequence of actions in a state transition system~\cite{sagae2005classifier,dyer2016recurrent,liu2017order}. These transition-based parsers achieve linear runtime in sentence length. More importantly, they construct partial trees during decoding, enabling the parser to leverage explicit structural information for predicting the next action.

\begin{figure}[h]
  \centering
  \includegraphics[width=0.8\linewidth]{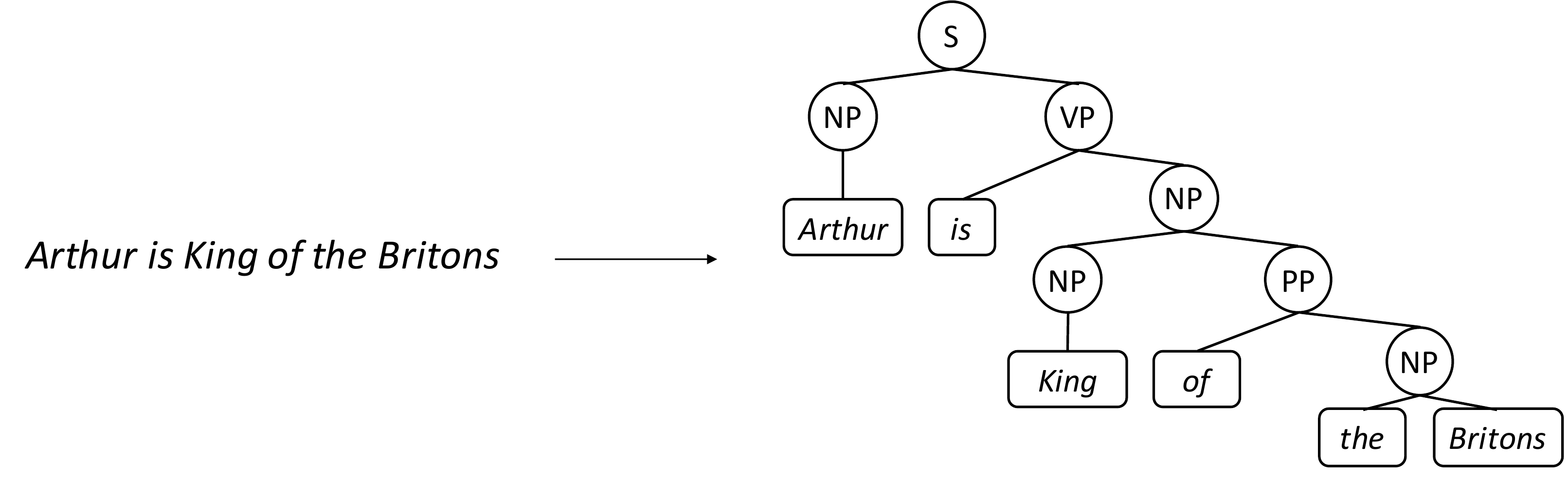}
  \caption{The constituency tree for ``Arthur is King of the Britons.'' Leaves are labeled with tokens, and internal nodes are labeled with syntactic categories, e.g., \emph{S} for \emph{sentence}, \emph{NP} for \emph{noun phrase}, \emph{VP} for \emph{verb phrase}, and \emph{PP} for \emph{prepositional phrase}.}
  \label{fig:trees}
\end{figure}

Most existing transition-based parsers adopt the shift-reduce transition system~\cite{sagae2005classifier,dyer2016recurrent,liu2017order}.  They represent the partial sentence as a stack of subtrees. At each step, the parser either pushes a new token onto the stack (\texttt{shift}) or combines two existing subtrees in the stack (\texttt{reduce}). 

Despite their empirical success, shift-reduce parsers appear to differ from how humans are known to perform parsing. Psycholinguistic research~\cite{marslen1973linguistic,sturt2005processing,stabler201513} has suggested that human parsing is \emph{strongly incremental}: at each step, humans process exactly one token---no more, no less---and integrate it into a single parse tree for the partial sentence. In a shift-reduce system, however, only \texttt{shift} actions process new tokens, and the partial sentence is represented as a stack of disconnected subtrees rather than a single connected tree.

This observation puts forward an intriguing question: can a strongly incremental transition system lead to a better parser? Intuitively, a strongly incremental system is more aligned with human processing; as a result, the sequence of actions may be easier to learn.

\smallsec{Attach-juxtapose transition system} 
We propose a novel transition system named \emph{attach-juxtapose}, which enables strongly incremental constituency parsing. For a sentence of length $n$, we start with an empty tree and execute $n$ actions; each action integrates exactly one token into the current partial tree, deciding on \emph{where} and \emph{how} to integrate the new token.  There are two types of actions: \texttt{attach}, which attaches the new token as a child to an existing node, and \texttt{juxtapose}, which juxtaposes the new token as a sibling to an existing node while also creating a shared parent node (Fig.~\ref{fig:actions}). We can prove that any parse tree without unary chains can be constructed by a unique sequence of actions in this attach-juxtapose system. 

Being strongly incremental, our system represents the state as a single tree rather than a stack of multiple subtrees. Not only is the single-tree representation more aligned with humans, but it also allows us to tap into a large inventory of model architectures for learning from graph data, such as TreeLSTMs~\cite{tai2015improved} and graph neural networks (GNNs)~\cite{kipf2016semi}. 
Further, the single-tree representation provides valid syntax trees for partial sentences, which is impossible in bottom-up shift-reduce systems~\cite{sagae2005classifier}. Taking ``Arthur is King of the Britons'' as an example, we can produce a valid tree for the prefix ``Arthur is King'' (Fig.~\ref{fig:actions} \emph{Bottom}). Whereas in bottom-up shift-reduce systems, you must complete the subtree for ``is King of the Britons'' before connecting it to ``Arthur.'' Therefore, our representation captures the complete syntactic structure of the partial sentence, and thus provides stronger guidance for action generation.

Our transition system can be understood as a refactorization of In-order Shift-reduce System (ISR) proposed by Liu and Zhang~\cite{liu2017order}. We prove that a sequence of actions in our system can be translated into a sequence of actions in ISR, but our sequence is shorter (Theorem~\ref{thm:transition}). Specifically, to generate a parse tree with $n$ leaves and $m$ internal nodes (assuming no unary chains), our sequence has length $n$, whereas ISR has length $n+2m$. On the other hand, each of our actions has a larger number of choices, resulting in a different trade-off between the sequence length and the number of choices per action. We hypothesize that our system achieves a trade-off more amenable to machine learning due to closer alignment with human processing.

\smallsec{Action generation with graph neural networks} 
Based on the attach-juxtapose system, we develop a strongly incremental parser by training a deep neural network to generate actions. Specifically, we adopt the encoder in prior work~\cite{kitaev2018constituency,zhou2019head} and propose a novel graph-based decoder. It uses GNNs to learn node features in the partial tree, and uses attention to predict where and how to integrate the new token.  To our knowledge, this is the first time GNNs are applied to constituency parsing. 

We evaluate our method on two standard benchmarks for constituency parsing: Penn Treebank (PTB)~\cite{marcus19building} and Chinese Treebank (CTB)~\cite{xue2005penn}. On PTB, our method outperforms existing parsers trained with only constituency trees. And it performs competitively with state-of-the-art parsers that use dependency trees as additional training data. On CTB, we achieve an F1 score of 93.59---a significant improvement of 0.95 upon previous best results. These results demonstrate the effectiveness of our strongly incremental parser. 

\smallsec{Contributions}
Our contributions are threefold. First, we propose attach-juxtapose, a novel transition system for constituency parsing. It is strongly incremental and motivated by psycholinguistics. Second, we provide theoretical results characterizing its capability and its connections with an existing shift-reduce system~\cite{liu2017order}. Third, we develop a parser by generating actions in the attach-juxtapose system. Our parser achieves state-of-the-art performance on two standard benchmarks.

\section{Related Work}
\smallsec{Constituency parsing} Significant progress in constituency parsing has been made by powerful token representations. Stern~et~al.~\cite{stern2017minimal} fed tokens into an LSTM~\cite{hochreiter1997long} to obtain contextualized embeddings. Gaddy~et~al.~\cite{gaddy2018s} demonstrated the value of character-level features. Kitaev~and~Klein~\cite{kitaev2018constituency} replaced LSTMs with self-attention layers~\cite{vaswani2017attention}. They also show that pre-trained contextualized embeddings such as ELMo~\cite{peters2018deep} significantly improve parsing performance. Further improvements~\cite{kitaev2019multilingual,zhou2019head} came with more powerful pre-trained embeddings, including BERT~\cite{devlin2019bert} and XLNet~\cite{yang2019xlnet}. Mrini~et~al.~\cite{mrini2019rethinking} proposed label attention layers that improved upon self-attention layers. All these works use an existing decoder (chart-based) and focus on designing a new encoder. In contrast, we use an existing encoder by Kitaev~and~Klein~\cite{kitaev2018constituency} and focus on designing a new decoder.

There are several types of decoders in prior work: chart-based decoders search for a tree maximizing the sum of span scores via dynamic programming (e.g., the CKY algorithm)~\cite{stern2017minimal,gaddy2018s,kitaev2018constituency,zhou2019head,mrini2019rethinking}; transition-based decoders build trees through a sequence of actions~\cite{sagae2005classifier,zhu2013fast,dyer2016recurrent,cross2016span,liu2017order}; sequence-based decoders generate a linearized sequence of the tree using seq2seq models~\cite{vinyals2015grammar,charniak2016parsing,suzuki2018empirical,gomez2018constituent,liu2018improving}. Our method is transition-based; but unlike the conventional shift-reduce methods, we propose a novel transition system, which is strongly incremental.

State-of-the-art parsers perform joint constituency parsing and dependency parsing, e.g., through head-driven phrase structure grammar (HPSG)~\cite{zhou2019head,mrini2019rethinking}. They use dependency trees as additional training data, which are converted from constituency trees by a set of hand-crafted rules~\cite{de2006generating}. In contrast, our parser is trained with only constituency trees and achieves competitive performance.

\smallsec{Transition-based constituency parsers} 
Most transition-based parsers adopt the shift-reduce transition system: a state consists of a buffer $B$ holding unprocessed tokens and a stack $S$ holding processed subtrees. Initially, $S$ is empty, and $B$ contains the entire input sentence.  In a successful final state, $B$ is empty, and $S$ contains a single subtree---the complete parse tree. Below are actions for a standard bottom-up shift-reduce system such as Sagae~and~Lavie~\cite{sagae2005classifier}, which corresponds to post-order traversal of the complete parse tree. 
\begin{itemize}[leftmargin=*]
    \item \texttt{shift}: Remove the first element from $B$ and push it onto $S$.
    \item \texttt{unary\_reduce(X)}: Pop a subtree from $S$; add a label \texttt{X} as its parent; and push it back onto $S$.
    \item \texttt{binary\_reduce(X)}: Pop two subtrees; add a label \texttt{X} as their shared parent; and push back.
\end{itemize}
Dyer~et~al.~\cite{dyer2016recurrent} proposed a top-down (pre-order) variant. 
Liu~and~Zhang~\cite{liu2017order} proposed an in-order shift-reduce system, outperforming bottom-up~\cite{sagae2005classifier} and top-down~\cite{dyer2016recurrent} baselines. Compared to our transition system, none of these shift-reduce systems is strongly incremental, because they represent the partial sentence as a stack of disconnected subtrees and they do not process exactly one token per action---only the \texttt{shift} action consumes a token. 

Among the shift-reduce systems, our approach is most related to the in-order system by Liu~and~Zhang~\cite{liu2017order} in that each action in our system can be mapped to a combination of actions in their system (see Sec.~\ref{sec:transition_system} for details). An analogy is that their actions resemble a set of microinstructions for CPUs, where each instruction is simple but it takes many instructions to complete a task; our actions resemble a set of complex instructions, where each instruction is more complex but it takes fewer instructions to complete the same task. 

Transition-based parsers use machine learning to make local decisions---determining the action to take at each step. This poses the question of how to represent a stack of subtrees in shift-reduce systems. Earlier works such as Sagae~and~Lavie~\cite{sagae2005classifier} and Zhu~et~al.~\cite{zhu2013fast} use hand-crafted features. More recent works~\cite{watanabe2015transition,dyer2016recurrent,cross2016span,liu2017order} have switched to recurrent neural networks and LSTMs. In particular, Dyer~et~al.~\cite{dyer2015transition} propose an LSTM-based model named Stack LSTM for representing the stack. It is designed for dependency parsing but applies to constituency parsing as well~\cite{dyer2015transition,liu2017order}. However, we do not have to represent stacks thanks to the single-tree state representation. Instead, we use graph neural networks (GNNs)~\cite{kipf2016semi} to represent partial trees. 

\smallsec{Incremental parsing}

Prior work has built parsers inspired by the incremental syntax processing of humans. Earlier works focused on psycholinguistic modeling of humans and evaluated on a handful of carefully curated sentences~\cite{milward1995incremental,ait1997incremental}. More recent methods switched to developing efficient parsers with a wide coverage of real-world texts. Roark~\cite{roark2001probabilistic} proposed a top-down incremental parser that expands nodes in the partial tree using a probabilistic context-free grammar (PCFG). In contrast, our system is more flexible by not restricted to any predefined grammar. Also, we predict actions leveraging not only top-down information but also bottom-up information.

Costa~et~al.~\cite{costa2003towards} proposed a transition system for incremental parsing. Similar to ours, it integrates exactly one token per step into the partial tree. However, at each step, they have to predict an unbounded number of labels, whereas we have to predict no more than two. Therefore, our action space is smaller than theirs and thus easier to navigate by learning-based parsers. In fact, this limitation may have prevented Costa~et~al.~\cite{costa2003towards} from building a fully functional parser, and they only evaluated on action generation. Collins~and~Roark~\cite{collins2004incremental} developed a parser based on Costa~et~al.~\cite{costa2003towards} by using grammar rules and heuristics to prune the action space. In contrast, our action space is more flexible without grammar rules but still tractable for machine learning models.

\smallsec{Graph neural networks for syntactic processing}
GNNs have been used to process syntactic information. These methods obtain syntax trees using external parsers and apply GNNs to the trees for downstream tasks such as pronoun resolution~\cite{xu2019look}, relation extraction~\cite{guo2019attention,zhang2018graph,sahu2019inter}, and machine translation~\cite{bastings2017graph,beck2018graph}. In contrast, we apply GNNs to partial trees for the task of parsing. Ji~et~al.~\cite{ji2019graph} used GNNs for graph-based dependency parsing. However, their method is not transition-based. They apply GNNs to complete graphs formed by all tokens, whereas we apply GNNs to partial trees.

\section{Attach-juxtapose Transition System}
\label{sec:transition_system}

\smallsec{Overview}
We introduce a novel transition system named \emph{attach-juxtapose} for strongly incremental constituency parsing. Our system is inspired by psycholinguistic research~\cite{marslen1973linguistic,sturt2005processing,stabler201513}; it maintains a single parse tree and adds one token to it at each step. Our system can produce valid syntax trees for partial sentences and can handle trees with arbitrary branching factors. 

For parsing a sentence of length $n$, we start with an empty tree and sequentially execute $n$ actions; each action integrates the next token into the current partial tree. Formally speaking, for the sentence $[w_0, w_1, \dots, w_{n-1}]$, the state at $i$th step is $s_i = (T_i, w_i)$, where $T_i$ is the partial tree for the prefix $[w_0, \dots, w_{i-1}]$. The state transition rules are $ T_0 = \texttt{empty\_tree}$ and $ T_{i + 1} = T_i(a_i)$, where $T_i(a_i)$ denotes the result of executing action $a_i$ on tree $T_i$. After $n$ steps, we end up with a complete tree $T_n$. The actions are designed to capture \emph{where} and \emph{how} to integrate a new token into the partial tree.

\smallsec{Where to integrate the new token}
Since the new token is to the right of existing tokens, it must appear on the \emph{rightmost chain}---the chain of nodes starting from the root and iteratively descending to the rightmost child (A similar observation was also made by Costa~et~al.~\cite{costa2003towards}). Formally speaking, at the $i$th step, we have a partial tree $T_i$ and a new token $w_i$. Let $\textit{rightmost\_chain}(T_i)$ denote the set of \emph{internal nodes} on the rightmost chain of $T_i$. We pick $\texttt{target\_node} \in \textit{rightmost\_chain}(T_i)$ as where the new token should be integrated.

\smallsec{How to integrate the new token}
Fig.~\ref{fig:actions} \emph{Top} shows the rightmost chain and \texttt{target\_node} (\emph{orange}), we design two types of actions specifying how to integrate the new token (\emph{blue}): 
\begin{itemize}[leftmargin=*]
    \item \texttt{attach(target\_node, parent\_label)}: Attach the token as a descendant of \texttt{target\_node}. The parameter \texttt{parent\_label} is optional; when provided, we create an internal node labeled \texttt{parent\_label} (\emph{green}) as the parent of the new token. \texttt{Parent\_label} then becomes the rightmost child of \texttt{target\_node} (as in Fig.~\ref{fig:actions} \emph{Top}). When \texttt{parent\_label} is not provided, the new token itself becomes the rightmost child of \texttt{target\_node}.
    \item \texttt{juxtapose(target\_node, parent\_label, new\_label)}: Create an internal node labeled \texttt{new\_label} (\emph{gray}) as the shared parent of \texttt{target\_node} and the new token. It then replaces \texttt{target\_node} in the tree. Similar to \texttt{attach}, we can optionally create a parent for the new token via the \texttt{parent\_label} parameter.
\end{itemize}

\begin{figure}[ht]
  \centering
  \includegraphics[width=1.0\linewidth]{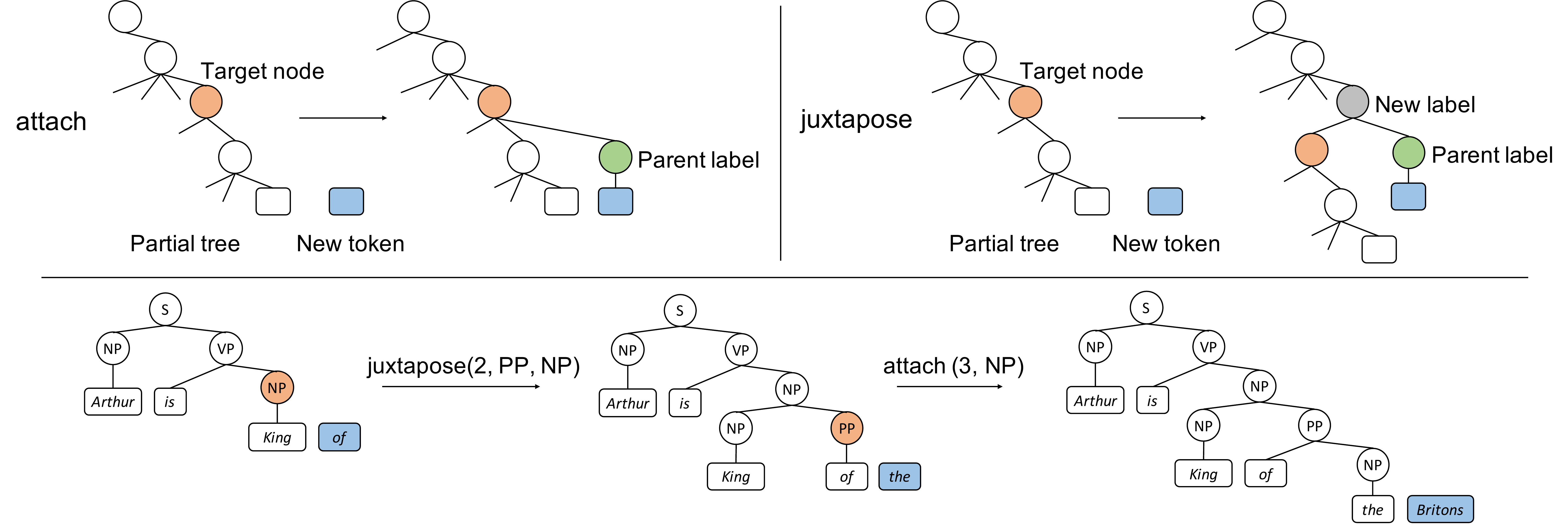}
  \caption{Actions in the attach-juxtapose transition system. \emph{Top-left}: Given a target node (\emph{orange}) on the rightmost chain, the \texttt{attach} action attaches the new token (\emph{blue}) as its descendant. \emph{Top-right}: The \texttt{juxtapose} action juxtaposes the new token and the target node in different branches of a shared ancestor (\emph{gray}). Both actions can optionally create a parent (\emph{green}) for the new token. \emph{Bottom}: Two example actions when parsing the sentence ``Arthur is King of the Britons.''}
  \label{fig:actions}
\end{figure}

Fig.~\ref{fig:actions} \emph{Bottom} shows two example actions when parsing ``Arthur is King of the Britons.'' We represent \texttt{target\_node} using its index on the rightmost chain (starting from 0). The complete action sequence to parse the sentence correctly (Fig.~\ref{fig:trees}) would be: \texttt{attach(0,~NP)}, \texttt{juxtapose(0, VP,~S)}, \texttt{attach(1, NP)}, \texttt{juxtapose(2, PP, NP)}, \texttt{attach(3, NP)}, \texttt{attach(4, None)}. Note that the first action \texttt{attach(0,~NP)} is a degenerated case. Since $T_0=~\texttt{empty\_tree}$, it is impossible to pick $\texttt{target\_node} \in~\textit{rightmost\_chain}(T_0)$. In this case, imagine a dummy root node for $T_0$; then we can execute \texttt{attach(0, parent\_label)}, making \texttt{parent\_label} the new root.

\smallsec{Oracle actions}
Having defined the attach-juxtapose transition system, we are yet to show its capability for constituency parsing: \emph{Given a constituency tree, is it always possible to find a sequence of oracle actions to produce the tree?} This question is important because if the oracle actions did not exist, it would not be possible to parse the sentence correctly. If the oracle actions do exist, a further question is: \emph{For a given tree, is the sequence of oracle actions unique?} Uniqueness is desirable because it guarantees an unambiguous supervision signal at each step when training the parser. We prove that the answers to both questions are positive under mild conditions:
\newtheorem{theorem}{Theorem}
\newtheorem{definition}{Definition}
\newtheorem{lemma}{Lemma}
\begin{theorem}[Existence of oracle actions]
\label{thm:existence}
Let $T$ be a constituency tree for a sentence of length $n$. If $T$ does not contain unary chains, there exists a sequence of actions $a_0, a_1, \dots, a_{n-1}$ such that $\texttt{empty\_tree}(a_0)(a_1)\dots(a_{n-1}) = T$. 
\end{theorem}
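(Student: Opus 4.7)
I would prove this by strong induction on $n$, the sentence length. For the base case $n=1$, the no-unary-chains hypothesis forces $T$ to be either the bare leaf $w_0$ or a single internal node labeled $L$ dominating $w_0$, each produced in one step by the degenerate initial rule: \texttt{attach}(0, \texttt{None}) or \texttt{attach}(0, $L$) respectively.

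For the inductive step, the main idea is to read off a canonical last action $a_{n-1}$ from the local shape of $T$ around the rightmost leaf $w_{n-1}$ and then show that undoing it yields a tree $T'$ with $n-1$ leaves that again satisfies the no-unary-chains condition, so the induction hypothesis applies. Let $v_1$ be the parent of $w_{n-1}$. The hypothesis leaves only two possibilities: either $v_1$ is branching (has $\geq 2$ children), or $v_1$ is a unary preterminal whose sole child is $w_{n-1}$, in which case its parent $v_2$ must itself be branching (else $v_2$ would have a single internal child and thus form a forbidden unary chain). Define $v^\star := v_1$ or $v^\star := v_2$ correspondingly, and let $\ell^\star$ be $\mathrm{None}$ or the label of $v_1$ accordingly. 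Let $L$ denote the second-rightmost child of $v^\star$; it exists because $v^\star$ is branching.

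The choice of $a_{n-1}$ then splits by the children of $v^\star$. If $v^\star$ has at least three children, or exactly two with $L$ a leaf, take $a_{n-1}$ to be \texttt{attach}($v^\star, \ell^\star$) and let $T'$ be $T$ with the rightmost-child subtree of $v^\star$ deleted; in $T'$ the node $v^\star$ still has $\geq 2$ children, or becomes a permitted unary preterminal over the leaf $L$, so no unary chain is introduced. Otherwise $v^\star$ has exactly two children and $L$ is internal, and I would take $a_{n-1}$ to be \texttt{juxtapose}($L, \ell^\star, v^\star.\mathrm{label}$) with $T'$ obtained from $T$ by replacing the subtree at $v^\star$ by the subtree at $L$; the parent of $v^\star$ simply swaps one internal child for another and $L$'s own subtree is untouched, so again no forbidden unary chain arises. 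In every case a direct unfolding of the action definitions confirms that applying $a_{n-1}$ to $T'$ reproduces $T$, and the chosen target ($v^\star$ or $L$) lies on the rightmost chain of $T'$ because it either was on, or has taken the place of a node on, the rightmost chain of $T$. The induction hypothesis applied to $T'$ then provides a length-$(n-1)$ sequence producing $T'$, and appending $a_{n-1}$ yields the desired sequence of length $n$.

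The main obstacle I expect is the case where $v_1$ is a unary preterminal: one must ``reach past'' $v_1$ to operate at the level of $v_2$ via the optional \texttt{parent\_label} argument, and the verifications both that the chosen action really recovers $T$ from $T'$ and that $T'$ remains unary-chain-free require careful tracking of which one or two internal nodes just above $w_{n-1}$ are newly created by that action. The boundary sub-case where $v^\star$ has exactly two children with $L$ a leaf is particularly delicate, since one uses precisely that the single remaining child of $v^\star$ in $T'$ is a leaf to ensure that the resulting unary structure is of the permitted preterminal kind rather than a forbidden internal unary chain.
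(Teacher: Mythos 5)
Your proposal is correct and takes essentially the same route as the paper: the paper also inducts on $n$ via a \emph{last\_action} procedure whose case split (does the rightmost leaf have siblings? does the resulting \emph{last\_subtree} have exactly one sibling that is internal?) is precisely your split on $v_1$ versus $v_2$ and on the children of $v^\star$, followed by checking that undoing the chosen action leaves a valid unary-chain-free tree to which the induction hypothesis applies. The only cosmetic differences are that the paper bases the induction at $n=0$ with \texttt{empty\_tree} and handles the root case as a separate branch rather than folding it into the base case, and that under the paper's definition the root must be an internal node, so your bare-leaf sub-case at $n=1$ does not actually arise.
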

\begin{theorem}[Uniqueness of oracle actions]
\label{thm:uniqueness}
Let $T$ be a constituency tree for a sentence of length $n$, and $T$ does not contain unary chains. If $a_0, a_1, \dots, a_{n-1}$ is a sequence of oracle actions, it is the only action sequence that satisfies $\texttt{empty\_tree}(a_0)(a_1)\dots(a_{n-1}) = T$.
\end{theorem}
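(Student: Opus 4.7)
The plan is to proceed by backward induction on $n$: given $T$, I would show that the last action $a_{n-1}$ and the penultimate tree $T_{n-1}$ are uniquely determined by the local structure near the rightmost leaf of $T$, then recurse on $T_{n-1}$. The crucial tool is a structural invariant that every intermediate tree $T_i$ in an oracle sequence terminating in a tree with no unary chains must satisfy; this invariant constrains which local shapes can appear and thereby pins the last action down uniquely.

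I would establish the invariant by forward induction over the four action variants: in every such $T_i$, the only internal node of the rightmost chain permitted to have a single child is the deepest one, and its sole child must then be the most recently added leaf $w_{i-1}$; off the rightmost chain, no unary chain ever appears. The critical case is juxtapose: if its target has a single child in $T_i$, the action moves that target off the rightmost chain, where no subsequent attach can add siblings to it, so the hanging unary chain would persist into the final tree --- contradicting the no-unary-chain hypothesis. Hence oracle sequences never juxtapose onto a dangling unary chain, and the invariant propagates across all four variants.

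Given $T$, let $w_{n-1}$ be its rightmost leaf and $p$ its parent. Since $T$ has no unary chains, $p$ has at least two children, which immediately forces the parent\_label parameter of $a_{n-1}$ to be \texttt{None}: otherwise parent\_label would be a brand-new unary parent of $w_{n-1}$, coinciding with $p$, and leaving $p$ a unary node in the final tree. Comparing the remaining action types, juxtapose creates the new parent with exactly two children and requires a non-leaf target. So: if $p$ has three or more children, $a_{n-1}$ must be \texttt{attach} with target $p$; if $p$ has two children whose left child $c$ is a leaf, $a_{n-1}$ is again \texttt{attach} with target $p$ (juxtapose is impossible on a leaf target); and if $p$ has two children with $c$ internal, the attach alternative would leave $p$ with only internal child $c$ in $T_{n-1}$, i.e.\ a unary chain sitting above the still-deeper node $c$ on the rightmost chain, violating the invariant --- so $a_{n-1}$ must be \texttt{juxtapose} with target $c$ and new\_label equal to the label of $p$. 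In every subcase $T_{n-1}$ is thereby uniquely determined.

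The induction then closes by repeating the same local analysis at $T_{n-1}$, with a modest extension when $T_{n-1}$ carries a legitimate unary chain at the bottom of its rightmost chain (the subcase where $p$'s two children are both leaves): there $a_{n-2}$ must have used parent\_label, and an analogous case split on the grandparent distinguishes attach from juxtapose. I expect the principal obstacle to be the invariant proof itself --- carefully verifying that each action variant preserves it, and in particular nailing down that juxtapose onto a unary-chain target is never oracle. Once the invariant is established, reading off $a_{n-1}$ and $T_{n-1}$ from $T$ becomes a mechanical case split near the rightmost leaf, and uniqueness follows.
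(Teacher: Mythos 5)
Your overall strategy is the same as the paper's: peel off the last action via a case analysis on the local shape of $T$ around its rightmost leaf, justify the forced choice by the fact that no intermediate tree can contain a unary chain, and recurse. (The paper leaves that invariant implicit; you state it explicitly, which is a plus.) However, there is a genuine error at the start of your case analysis. The claim ``since $T$ has no unary chains, $p$ has at least two children'' is false: a unary chain requires two \emph{internal} nodes $x, y$ with $y$ the sole child of $x$, so an internal node whose only child is a leaf is perfectly legal in the final tree. For the sentence ``Arthur is King'' the target tree is \texttt{(S (NP Arthur) (VP is (NP King)))}, and the rightmost leaf ``King'' is the only child of its parent. In exactly this situation the last action must carry a non-\texttt{None} \texttt{parent\_label} (here \texttt{attach(1, NP)}), the relevant ``last subtree'' to undo is the leaf's parent rather than the leaf itself, and the attach-versus-juxtapose split has to be performed one level up, on the grandparent. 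Your analysis silently discards this entire branch, which is the paper's Cases 2-1 through 2-3 and which occurs, for instance, at the first action of every sentence. You clearly possess the needed machinery --- your ``modest extension'' for $T_{n-1}$ is precisely this grandparent analysis --- but you misattribute the configuration to intermediate trees only, when it equally occurs in the final tree $T$.

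A secondary, smaller problem is the justification of your invariant's ``critical case.'' Juxtaposing onto a target whose only child is a leaf is routine and harmless: the target moves off the rightmost chain still holding a single \emph{leaf} child, which is not a unary chain (this happens, e.g., when \texttt{juxtapose(0, VP, S)} is applied to \texttt{(NP Arthur)}). The configuration you actually need to exclude --- a rightmost-chain node whose only child is an \emph{internal} node --- is never created by any action in the first place (\texttt{attach} only ever creates a fresh parent with a single leaf child, and \texttt{juxtapose} creates a node with two children and never decreases any node's child count), so there is nothing for \texttt{juxtapose} to ``move off the chain.'' Once you replace your invariant with the simpler, sufficient one --- every tree reachable from \texttt{empty\_tree} by \emph{any} action sequence is free of unary chains --- and restore the missing \texttt{parent\_label} branch of the case analysis, your argument coincides with the paper's proof.
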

The condition regarding unary chains is not a restriction in practice, as we can remove unary chains using the preprocessing technique in prior work~\cite{kitaev2018constituency,zhou2019head,mrini2019rethinking}. The theorems above can be proved by constructing an algorithm to compute the oracle actions. We present the algorithm and detailed proofs in \hyperref[appendix:proofs1]{Appendix A}. Intuitively, given a tree $T$, we recursively find and undo the last action until $T$ becomes \texttt{empty\_tree}. 

\smallsec{Connections with In-order Shift-reduce System}
Our attach-juxtapose transition system is closely related to In-order Shift-reduce System (ISR) proposed by Liu~and~Zhang~\cite{liu2017order}. ISR's state space is strictly larger than ours; we prove it to be equivalent to an augmented version of our state space. 
Given a sentence $[w_0, w_1, \dots, w_{n-1}]$, the space of partial trees is $\mathcal{U} = \{t ~|~ \exists~0 \leq m \leq n, \textnormal{ s.t. } t \textnormal{ is a constituency tree for } [w_0, w_1, \dots, w_{m-1}]\}$. By Theorem~\ref{thm:existence}, our state space (for the given sentence) is a subset of $\mathcal{U}$, i.e., $\mathcal{U}_{\textit{AJ}} = \{t ~|~ t \in \mathcal{U}, t \textnormal{ does not contain unary chains} \}$. To bridge $\mathcal{U}_{\textit{AJ}}$ and $\mathcal{U}_{\textit{ISR}}$ (the state space of IRS), we define the augmented space of partial trees to be $\mathcal{U}' = \{(t, i) ~|~ t \in \mathcal{U}, i \in \mathbb{Z}, -1 \leq i < L(t) \}$, where $L(t)$ denotes the number of internal nodes on the rightmost chain of $t$. We assert that $\mathcal{U}'$ is equivalent to $\mathcal{U}_{\textit{ISR}}$.

\begin{figure}[ht]
  \centering
  \includegraphics[width=1.0\linewidth]{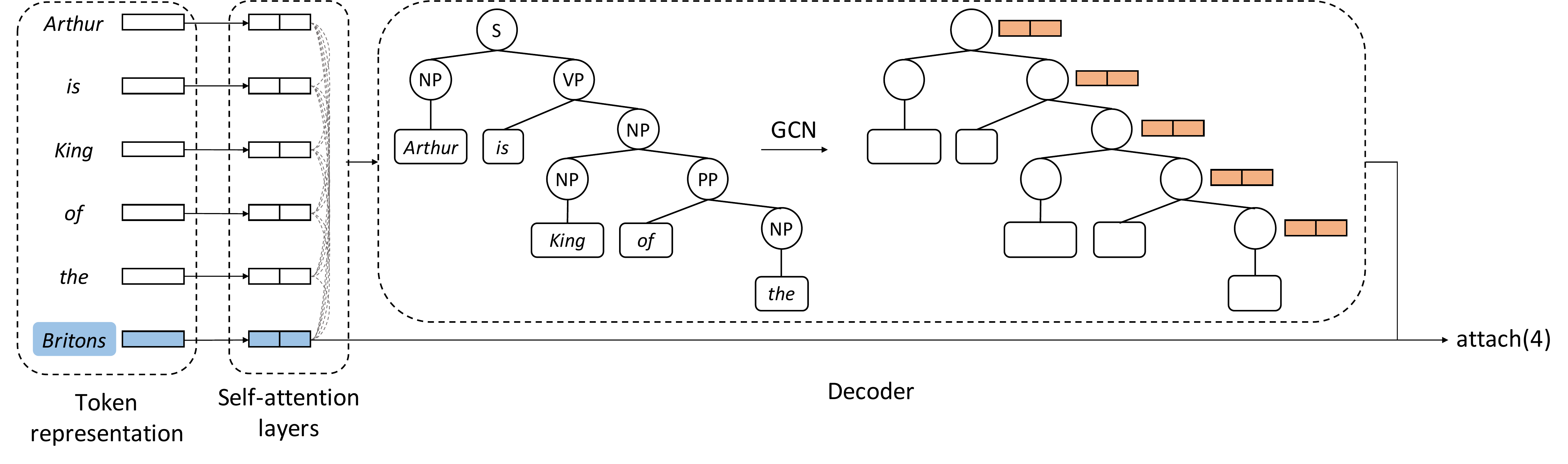}
  \caption{The architecture of our model for action generation. We use the self-attention encoder in prior work~\cite{kitaev2018constituency,zhou2019head} and generate actions using a GCN-based~\cite{kipf2016semi} decoder.}
  \label{fig:model}
\end{figure}

\begin{theorem}[Connection in state spaces]
\label{thm:equiv}
There is a bijective mapping $\varphi: \mathcal{U}_{\textit{ISR}} \rightarrow \mathcal{U}'$ between the set of legal states in In-order Shift-reduce System and the augmented space of partial trees.
\end{theorem}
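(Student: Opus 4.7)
The plan is to construct $\varphi$ explicitly and then build its inverse, showing the two are mutually inverse. I start by unpacking the relevant ISR state representation from Liu and Zhang: an ISR state is a stack whose entries are either completed subtrees or ``projected'' non-terminal markers (each of which has a fixed sequence of completed subtrees to its right on the stack), together with an unread suffix of the buffer. The key structural observation is that the projected markers on the stack, read from bottom to top, correspond exactly to the internal nodes on the rightmost chain of a single partial tree: each marker's label becomes an internal node, and the completed subtrees between it and the next marker (or the top of the stack) become its children to the left of the open rightmost branch.

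The forward map $\varphi$ would proceed as follows. Given an ISR state, I walk the stack from the bottom up; each projected marker $X$ becomes an internal node labelled $X$ on the rightmost chain of the tree-under-construction, with the completed subtrees appearing above it (but below the next marker) attached as its left-of-chain children in order. If the topmost stack entries are completed subtrees lying above the last projected marker, I attach them too, and the index $i$ is set to point to that topmost projected marker; if instead the top of the stack is itself a projected marker with no completed subtrees above it yet, I set $i$ to point one above (meaning an additional ``pending'' position is open). The degenerate cases (empty stack, or a single completed subtree on the stack with no open projections) correspond to $i=-1$, which is why the augmented space allows $i \geq -1$. A careful bookkeeping of these cases is needed so that the resulting pair $(t,i)$ satisfies $-1 \leq i < L(t)$ and $t \in \mathcal{U}$.

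For the inverse $\varphi^{-1}$, given $(t,i) \in \mathcal{U}'$, I read the rightmost chain of $t$ top-down: each internal node on the chain becomes a projected marker on the stack (in bottom-to-top order matching root-to-leaf), and the left-of-chain children of each such node are flattened into the stack as completed subtrees placed immediately above that marker. The index $i$ dictates how many of the topmost projections are ``still open'' versus already populated with their rightmost child on the stack, producing a unique ISR configuration. I would then verify that both compositions $\varphi \circ \varphi^{-1}$ and $\varphi^{-1} \circ \varphi$ are the identity by induction on stack depth (respectively, on the length of the rightmost chain), which reduces to checking that the local correspondence between a single marker and a single chain node is reversible.

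The main obstacle I expect is pinning down the exact ISR state representation carefully enough that the correspondence is truly a bijection rather than merely a surjection onto $\mathcal{U}'$ from equivalence classes of states. In particular, ISR permits states that differ by the placement of a projection marker relative to pending completed subtrees, and I must argue that the legality constraints of ISR (which enforce the in-order discipline) cut the state space down exactly to those stack shapes produced by $\varphi^{-1}$. The role of the index $i$ is precisely to disambiguate whether the ``next'' move at the top of the rightmost chain is an \texttt{attach}-style extension or a \texttt{juxtapose}-style insertion, which mirrors the ISR choice between \texttt{shift}/\texttt{project} and \texttt{reduce}; once that discipline is matched, well-definedness and bijectivity follow.
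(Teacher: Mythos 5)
Your overall strategy is the same as the paper's: identify the projected nonterminals on the stack (read bottom to top) with a prefix of the internal nodes on the rightmost chain (read root downward), define the inverse by re-linearizing the marked tree, and verify the two maps are mutually inverse by induction. The paper phrases the forward map operationally---$\varphi(s)=(t,L(s)-1)$, where $t$ is obtained by repeatedly executing \texttt{reduce} until a single tree remains (legality, via Lemma~\ref{thm:legal}, guarantees this is possible) and $L(s)$ is the number of projected nonterminals in $s$---and phrases the inverse $\gamma$ as a depth-bounded in-order traversal of $t$ that pushes whole subtrees for nodes off the rightmost chain or deeper than $i$, and pushes projected nonterminals for chain nodes of depth at most $i$. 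Your ``walk the stack bottom-up'' and ``read the chain top-down'' descriptions are the structural counterparts of exactly these two constructions, and your induction on stack depth plays the role of the paper's induction on the number of \texttt{reduce} actions, whose key step is that one \texttt{reduce} turns $\gamma(t,i)$ into $\gamma(t,i-1)$.

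The one place your proposal would fail as written is the index convention. You set $i$ to ``one above'' the topmost projected marker whenever that marker has no completed subtrees above it. Consider the legal state reached by \texttt{shift} followed by \texttt{PJ-X}: the stack holds a single token $w$ with a projected nonterminal $\mathrm{X}$ on top of it. Reducing yields the tree whose root $\mathrm{X}$ has the sole leaf child $w$, so $L(t)=1$; your rule assigns $i=1$, violating the constraint $i<L(t)$ in the definition of $\mathcal{U}'$, so the map does not even land in the claimed codomain. The distinction you are trying to encode with this case split---whether the topmost open constituent has already received material to the right of its projection---is already recorded in the tree $t$ itself, so encoding it again in $i$ is both unnecessary and harmful. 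The fix is the paper's uniform convention: $i$ is always the number of projected nonterminals minus one, with no case analysis on what sits at the top of the stack. With that correction your construction coincides with the paper's and the rest of the argument goes through.
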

We prove the theorem in \hyperref[appendix:proofs2]{Appendix B}. Intuitively, a state in ISR is a stack configuration; it corresponds to an element of $\mathcal{U}'$, which can be understood as a partial tree $t \in \mathcal{U}$ with a special node on the rightmost chain marked by an integer $i$. 

Not only is $\varphi$ bijective, but it also preserves actions. In other words, each action in our system can be mapped to a combination of actions in ISR. To see this, we define an injective mapping $\xi: \mathcal{U}_{\textit{AJ}} \rightarrow \mathcal{U}'$ such that $\xi (t) = (t, L(t) - 1)$. Then by Theorem~\ref{thm:equiv}, $\varphi^{-1} \circ \xi: \mathcal{U}_{\textit{AJ}} \rightarrow \mathcal{U}_{\textit{ISR}}$ is an injective mapping from our state space to ISR's state space. And we have the following connection between actions:
\begin{theorem}[Connection in actions]
\label{thm:transition}
Let $t_1$ and $t_2$ be two partial trees without unary chains, i.e., $t_1, t_2 \in \mathcal{U}_{\textit{AJ}}$. If $a$ is an attach-juxtapose action that brings $t_1$ to $t_2$, there must exist a sequence of actions in In-order Shift-reduce System that brings $\varphi^{-1} \circ \xi (t_1)$ to $\varphi^{-1} \circ \xi (t_2)$.
\end{theorem}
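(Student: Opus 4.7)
The plan is to work in the augmented space $\mathcal{U}'$ rather than directly in ISR, using the bijection $\varphi$ of Theorem~\ref{thm:equiv} as a dictionary: every single ISR action corresponds to a primitive move on pairs $(t, i) \in \mathcal{U}'$, and conversely any sequence of such primitive moves can be transported back to an ISR sequence via $\varphi^{-1}$. It therefore suffices to exhibit, for each attach-juxtapose action $a$ taking $t_1$ to $t_2$, a finite sequence of primitive moves in $\mathcal{U}'$ that carries $\xi(t_1) = (t_1, L(t_1)-1)$ to $\xi(t_2) = (t_2, L(t_2)-1)$; applying $\varphi^{-1}$ then produces the required ISR sequence.

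The construction splits on the type of $a$. For \texttt{attach(k, X)}, starting at cursor $L(t_1)-1$, I would first issue $L(t_1)-1-k$ reduce-style moves that walk the cursor up the rightmost chain until it points at the target node at position $k$. Next, if \texttt{parent\_label} $=$ \texttt{X} is provided, a projection move attaches a fresh internal node labeled \texttt{X} as the new rightmost child of the target; finally a shift inserts $w_i$ as its rightmost leaf. The resulting tree is precisely $t_2$, and the cursor sits at the deepest internal node on its rightmost chain, i.e.\ at $L(t_2)-1$, as required. For \texttt{juxtapose(k, X, Y)}, the same initial block of reduces brings the cursor down to $k$; then an in-order projection labeled \texttt{Y} wraps the subtree at that position (this is exactly the in-order hook of ISR that our juxtapose collapses into one step), an optional projection labeled \texttt{X} opens the parent of the new token, and a shift of $w_i$ closes out the sequence.

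The main obstacle will be verifying the invariants that make the construction legal. Specifically, at every intermediate step the pair $(t, i)$ must satisfy $-1 \leq i < L(t)$, and the primitive move proposed at that step must be the $\varphi$-image of an ISR action that is legal in $\varphi^{-1}(t, i)$. The hypothesis that $t_1$ and $t_2$ contain no unary chains is what guarantees that the counts $L(t_1)-1-k$ and $L(t_2)$ are well defined and that the numbered positions on the rightmost chain correspond unambiguously to distinct internal nodes, so the descending block of reduces lands exactly at the intended target and the final cursor agrees with $\xi(t_2)$. Once these invariants are checked by a short induction on the length of the macro-sequence, transporting the path through $\varphi^{-1}$ yields the desired ISR sequence and finishes the proof.
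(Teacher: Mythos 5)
Your high-level strategy is the same as the paper's: split on the action type, prepend a block of \texttt{reduce} actions that move the ``cursor'' $i$ up the rightmost chain (using the fact, established inside the proof of Theorem~\ref{thm:equiv}, that one \texttt{reduce} sends $\varphi^{-1}(t,i)$ to $\varphi^{-1}(t,i-1)$), then finish with a \texttt{shift} and some projections, and transport everything through $\varphi$. However, your concrete action sequences are wrong in two ways that break the argument. First, the ordering of \texttt{shift} and the projection of \texttt{parent\_label} is reversed. ISR is an \emph{in-order} system: \texttt{PJ-X} projects a nonterminal whose \emph{first child is already the top of the stack}. Since the node labeled \texttt{X} created by \texttt{attach(k, X)} has the new token as its first (and only) child, the token must be shifted \emph{before} \texttt{PJ-X} is issued. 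With your order (\texttt{PJ-X} then \texttt{shift}), \texttt{X} would capture as its first child whatever complete subtree currently tops the stack --- the old rightmost subtree of the target --- so the eventual \texttt{reduce} builds a tree different from $t_2$. In the juxtapose case your order \texttt{PJ-Y}, \texttt{PJ-X}, \texttt{shift} is even worse: it puts two consecutive projected nonterminals on the stack, which the paper's Lemma~\ref{thm:legal} identifies as an unrecoverable, illegal configuration. The correct sequences are \texttt{reduce}$^{L(t_1)-i-1}$, \texttt{shift}, optional \texttt{PJ-X} for \texttt{attach}, and \texttt{reduce}$^{L(t_1)-i}$, \texttt{PJ-Y}, \texttt{shift}, optional \texttt{PJ-X} for \texttt{juxtapose}.

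Second, your reduce count for \texttt{juxtapose} is off by one. You reuse the block of $L(t_1)-1-k$ reduces that lands the cursor at position $k$, but \texttt{juxtapose(k, X, Y)} wraps the \emph{entire} subtree rooted at the target as the first child of the new node \texttt{Y}. That subtree must therefore be fully closed into a single stack element before \texttt{PJ-Y} fires, which requires $L(t_1)-k$ reduces (cursor at $k-1$), not $L(t_1)-1-k$. With your count the target node is still an open projected nonterminal when \texttt{PJ-Y} is issued, so \texttt{Y} is projected over an incomplete constituent and the resulting tree is not $t_2$. You can check this against the paper's worked example: for \texttt{juxtapose(2, PP, NP)} on the partial tree for ``Arthur is King'' we have $L(t_1)=3$ and $i=2$, and exactly one \texttt{reduce} is needed (closing the \emph{NP} over ``King'') before \texttt{PJ-NP}, whereas your formula gives zero.
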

We present a constructive proof in \hyperref[appendix:proofs2]{Appendix B}, making it possible to translate any action sequence on our system to a longer sequence in ISR. Theorem~\ref{thm:transition} also implies any reachable parse tree in our system can also be reached in ISR. And by Theorem~\ref{thm:existence}, both our system and ISR can generate any tree without unary chains.

\section{Action Generation with Graph Neural Networks}

Given the attach-juxtapose system, we develop a model for constituency parsing by generating actions based on the partial tree and the new token (Fig.~\ref{fig:model}). First, it encodes input tokens as vectors using the self-attention encoder in prior work~\cite{kitaev2018constituency,zhou2019head}. These vectors are used to initialize node features in the partial tree, which is then fed into a graph convolutional network (GCN)~\cite{kipf2016semi}. The GCN produces features for each node, and in particular, the features on the rightmost chain. Finally, an attention-based action decoder generates the action based on the new token and the rightmost chain.

\smallsec{Token encoder}
We use the same encoder as Kitaev~and~Klein~\cite{kitaev2018constituency} and Zhou~and~Zhao~\cite{zhou2019head}. It consists of a pre-trained contextualized embedding such as BERT~\cite{devlin2019bert} or XLNet~\cite{yang2019xlnet}, followed by a few additional self-attention layers. Like in prior work, we separate content and position information; the resulting token features are the concatenation of content features and position features. The encoder is not incremental due to how self-attention works; it is applied once to the entire sentence. Then we apply the decoder to generate actions in a strongly incremental manner.

\smallsec{Graph convolutional neural network} We use GCN on the partial tree to produce features for nodes on the rightmost chain. Initially, leaf features are provided by the encoder, whereas for an internal node labeled $l$ that spans from position $i$ to $j$ (endpoints included), the initial feature $x=[W_l, x_p] \in \mathbb{R}^{D}$ is a concatenation of label and position embeddings: $W_l$ is a $\frac{D}{2}$-dimensional learned embedding for label $l$. And $x_p = (P_i + P_j) / 2$ is the position embedding averaging two endpoints, where $P$ is the same position embedding matrix in the self-attention encoder.

The initial node features go through several GCN layers with residual connections. We use a variant of the original GCN layer~\cite{kipf2016semi} to separate content features and position features (details in \hyperref[appendix:gcn]{Appendix C}). The GCN produces features for all nodes. However, we are only interested in nodes on the rightmost chain, as they are candidates for \texttt{target\_node} in actions.

\smallsec{Action decoder}
Given the structure of our actions (Sec.~\ref{sec:transition_system}), the action decoder has to (1) choose a \texttt{target\_node} on the rightmost chain; (2) decide between \texttt{attach} and \texttt{juxtapose}; and (3) generate the parameters \texttt{parent\_label} and \texttt{new\_label}.

We choose \texttt{target\_node} using attention on the rightmost chain. Let $L$ be the size of the chain, $Y = [Y_c, Y_p] \in \mathbb{R}^{L \times D}$ be the features on the chain produced by the GCN, $z = [z_c, z_p] \in \mathbb{R}^{1 \times D}$ be the feature of the new token given by the encoder. They are both concatenation of content and position features.
We generate attention weights for nodes on the rightmost chain as $w~=~f_c([Y_c, \mathbf{1}^{L \times 1} z_c])~+~f_p([Y_p, \mathbf{1}^{L \times 1} z_p])$, where $\mathbf{1}^{L \times 1}$ is a $L \times 1$ matrix of ones. $[\cdot, \cdot]$ concatenates two matrices horizontally, and $f_c$, $f_p$ are two-layer fully-connected networks with ReLU~\cite{nair2010rectified} and layer normalization~\cite{ba2016layer}. $w \in \mathbb{R}^{L \times 1}$ and we pick the node with maximum attention as \texttt{target\_node}.

Since the parameter \texttt{new\_label} is only for \texttt{juxtapose}, we can interpret \texttt{new\_label} = \texttt{None} as \texttt{attach}. So we only need to generate \texttt{parent\_label}, \texttt{new\_label} $\in V \cup \{\texttt{None}\}$, where $V$ is the vocabulary of labels. We generate them using the new token and a weighted average of the rightmost chain: $[u, v] = g([z, \sigma(w)^T Y])$, where $\sigma$ is the sigmoid function, $u, v \in \mathbb{R}^{|V|+1}$ are predicted log probabilities of \texttt{parent\_label} and \texttt{new\_label}, and $g$ is a two-layer layer-normalized network.

\smallsec{Training} 
We train the model to predict oracle actions. At any step, let $a=(\texttt{target\_node}, \texttt{parent\_label}, \texttt{new\_label})$ be the oracle action; recall that \texttt{new\_label}~=~\texttt{None} implies \texttt{attach}, and \texttt{new\_label} $\neq$ \texttt{None} implies \texttt{juxtapose}. The loss function is a sum of cross-entropy losses for each component: $\mathcal{L} = \mathcal{CE}(w, \texttt{target\_node}) + \mathcal{CE}(u, \texttt{parent\_label}) + \mathcal{CE}(v, \texttt{new\_label}) $. For a batch of training examples, the losses are summed across steps and averaged across different examples.

\section{Experiments}

\smallsec{Setup}
We evaluate our model for constituency parsing on two standard benchmarks: Chinese Treebank 5.1 (CTB)~\cite{xue2005penn} and the Wall Street Journal part of Penn Treebank (PTB)~\cite{marcus19building}. PTB consists of 39,832 training examples; 1,700 validation examples; and 2,416 testing examples. Whereas CTB consists of 17,544/352/348 examples for training/validation/testing respectively. Each example is a constituency tree with words and POS tags. 

For both datasets, we follow the standard data splits and preprocessing in prior work~\cite{liu2017order,shen2018straight,kitaev2018constituency,zhou2019head}. In evaluation, we report four metrics---exact match (EM), F1 score, labeled precision (LP), and labeled recall (LR)---all computed by the standard Evalb\footnote{\url{https://nlp.cs.nyu.edu/evalb/}} tool. The testing numbers are produced by models trained on training data alone (not including validation data). 

We use the same technique in prior work~\cite{kitaev2018constituency,zhou2019head,mrini2019rethinking} to remove unary chains by collapsing multiple labels in a unary chain into a single label. It does not affect evaluation, as we revert this process before computing evaluation metrics.

\smallsec{Training details}
We train the model to predict oracle actions through teacher forcing~\cite{williams1989learning}---the model takes actions according to the oracle rather than the predictions. Model parameters are optimized using RMSProp~\cite{tieleman2012lecture} with a batch size of $32$. We decrease the learning rate by a factor of $2$ when the best validation F1 score plateaus. The model is implemented in PyTorch~\cite{paszke2019pytorch} and takes $2 \sim 3$ days to train on a single Nvidia GeForce GTX 2080 Ti GPU. For fair comparisons with prior work, we use the same pre-trained BERT and XLNet models\footnote{\url{https://huggingface.co/transformers/pretrained_models.html}}: \texttt{xlnet-large-cased} and \texttt{bert-large-uncased} for PTB; \texttt{bert-base-chinese} for CTB.

\begin{table}[ht]
  \caption{Constituency parsing performance on Penn Treebank (PTB). Methods with $\star$ are trained with extra supervision from dependency parsing data. Methods with $\dagger$ are reported in the re-implementation by Fried~et~al.~\cite{fried2019cross}. Liu~and~Zhang~\cite{liu2017order} is transition-based, whereas other baselines are chart-based. We run each experiment 5 times to report the mean and standard error (SEM) of four metrics---exact match (EM), F1 score, labeled precision (LP), and labeled recall (LR). Our method performs competitively with state of the art and achieves the highest EM.}
  \label{table:constituency_english}
  \setlength\tabcolsep{3pt}
  \centering 
  \begin{tabular}{@{}llllll@{}}
    \toprule
     Model &  EM & F1 & LP & LR & \#Params  \\
    \midrule
    Liu~and~Zhang~\cite{liu2017order} & - & 91.8 & - & - & - \\
    Liu~and~Zhang~\cite{liu2017order} (BERT) \textsuperscript{$\dagger$} & 57.05 & 95.71 & - & - & - \\ 
    Kitaev~and~Klein~\cite{kitaev2018constituency} & 47.31 & 93.55 & 93.90 & 93.20 & 26M \\
    Kitaev~and~Klein~\cite{kitaev2018constituency} (ELMo) & 53.06 & 95.13 & 95.40 & 94.85 & 107M \\
    Kitaev~et~al.~\cite{kitaev2019multilingual} (BERT) & - & 95.59 & 95.46 & 95.73 & 342M \\
    Zhou~and~Zhao~\cite{zhou2019head} (GloVe) \textsuperscript{$\star$} & 47.72 & 93.78 & 93.92 & 93.64 & 51M \\
    Zhou~and~Zhao~\cite{zhou2019head} (BERT) \textsuperscript{$\star$} & 55.84 & 95.84 & 95.98 & 95.70 & 349M \\
    Zhou~and~Zhao~\cite{zhou2019head} (XLNet) \textsuperscript{$\star$} & \underline{58.73} & 96.33 & 96.46 & \underline{96.21} & 374M \\
    Mrini~et~al.~\cite{mrini2019rethinking} (XLNet) \textsuperscript{$\star$} & 58.65 & \textbf{96.38} & \underline{96.53} & \textbf{96.24} & 459M \\
    \midrule
    Ours (BERT) & 57.29 $\pm$ 0.57 & 95.79 $\pm$ 0.05 & 96.04 $\pm$ 0.05 & 95.55 $\pm$ 0.06 & 377M \\
    Ours (XLNet) & \textbf{59.17} $\pm$ 0.33 & \underline{96.34} $\pm$ 0.03 & \textbf{96.55} $\pm$ 0.02 & 96.13 $\pm$ 0.04 & 391M \\
    \bottomrule
  \end{tabular}
\end{table}

\smallsec{Parsing performance}
Table~\ref{table:constituency_english} summarizes our PTB results compared to state-of-the-art parsers, including both chart-based parsers~\cite{kitaev2018constituency,kitaev2019multilingual,zhou2019head,mrini2019rethinking} and transition-based parsers~\cite{liu2017order}. Methods with $\star$ are trained with extra supervision from dependency parsing data. Methods with $\dagger$ are reported in not their original papers but the re-implementation by Fried~et~al.~\cite{fried2019cross}, since the original versions did not use BERT. Liu~and~Zhang~\cite{liu2017order} (BERT) \textsuperscript{$\dagger$} performs beam search during testing with a beam size of 10. We do the same for fair comparisons, which improves the performance marginally ($0.05$ in EM and $0.02$ in F1 for our model with XLNet). Some metrics for prior work are missing because they are neither reported in the original papers nor available using the released model and code. We run each experiment 5 times with different random seeds to report the mean and its standard error (SEM). Overall, our method performs competitively with state-of-the-art parsers on PTB. It achieves higher EM using the same pre-trained embedding (BERT or XLNet). Also, our method has a comparable number of parameters with existing methods.

\begin{table}[ht]
  \caption{Constituency parsing performance on Chinese Treebank (CTB). $\star$ and $\dagger$ bear the same meaning as in Table~\ref{table:constituency_english}. Our method outperforms state-of-the-art parsers by a large margin.}
  \label{table:constituency_chinese}
  \centering 
  \begin{tabular}{@{}lllll@{}}
    \toprule
    Model & EM & F1 & LP & LR \\
    \midrule
    Kitaev~et~al.~\cite{kitaev2019multilingual} & - & 91.75 &  91.96 & 91.55 \\
    Kitaev~et~al.~\cite{kitaev2019multilingual} (BERT) \textsuperscript{$\dagger$} & 44.42 & 92.14 &  - & - \\
    Zhou~and~Zhao~\cite{zhou2019head} \textsuperscript{$\star$} & - & 92.18 & 92.33 & \underline{92.03} \\
    Mrini~et~al.~\cite{mrini2019rethinking} (BERT) \textsuperscript{$\star$} & - & \underline{92.64} & \underline{93.45} & 91.85 \\
    Liu~and~Zhang~\cite{liu2017order} & - & 86.1 & - & - \\
    Liu~and~Zhang~\cite{liu2017order} (BERT) \textsuperscript{$\dagger$} & \underline{44.94} & 91.81 & - & - \\
    \midrule
    Ours (BERT) & \textbf{49.72} $\pm$ 0.83 & \textbf{93.59} $\pm$ 0.26 & \textbf{93.80} $\pm$ 0.26 & \textbf{93.40} $\pm$ 0.28 \\
    \bottomrule
  \end{tabular}
\end{table}

Table~\ref{table:constituency_chinese} summarizes our results on CTB. Our method outperforms existing parsers by a large margin (0.95 in F1). Compared to PTB, the CTB results have a larger SEM. The reason could be that CTB has a small testing set of only 348 examples, leading to less stable evaluation metrics. However, the SEM is still much smaller than our performance margin with existing parsers.

\smallsec{Parsing speed}

We measure parsing speed empirically using the wall time for parsing the 2,416 PTB testing sentences. Results are shown in Table~\ref{table:speed}. It takes 33.9 seconds for our method (with XLNet, without beam search), 37.3 seconds for Zhou~and~Zhao~[49], and 40.8 seconds for Mrini~et~al.~[27]. Our method is slightly faster, but the gap is small. About 50\% of the time is spent on the XLNet encoder, which is shared among all three methods and explains their similar run time. These experiments were run on machines with 2 CPUs, 16GB memory, and one GTX 2080 Ti GPU.

\begin{table}[ht]
  \caption{The wall time for parsing the PTB testing set. We run each experiment 5 times.}
  \label{table:speed}
  \setlength\tabcolsep{3pt}
  \small
  \centering
  \begin{tabular}{@{}cccc@{}}
    \toprule
    & Ours (XLNet) & Zhou~and~Zhao~\cite{zhou2019head} (XLNet) & Mrini~et~al.~\cite{mrini2019rethinking} (XLNet) \\
    \midrule
    Time (seconds) & \textbf{33.9} $\pm$ 0.3 & 37.3 $\pm$ 0.2 & 40.8 $\pm$ 0.9 \\
    \bottomrule
  \end{tabular}
\end{table}

\smallsec{Effect of the transition system}
Our method differs from Liu~and~Zhang~\cite{liu2017order} in not only the transition system but also the overall model architecture. To more closely compare our attach-juxtapose transition system with their In-order Shift-reduce System (ISR), we perform an ablation that only changes the transition system while keeping the other part of the model as close as possible. 

We implement a baseline that generates ISR actions on top of our encoder and GCNs. To that end, we rely on Theorem~\ref{thm:equiv} to interpret ISR states (stacks) as augmented partial trees. Specifically, given an ISR state $s \in \mathcal{U}_{\textit{ISR}}$, we have $\varphi(s) \in \mathcal{U}'$ from Theorem~\ref{thm:equiv}. We know $\varphi(s) = (t, i)$, where $t$ is a partial tree and $i$ is an integer ranging from $-1$ to $L(t) - 1$. First, we encode $t$ in the same way as before, using XLNet and GCNs. Then, we take the GCN feature of the $i$th node on the rightmost chain and use it to generate actions in ISR. We add a special node to the rightmost chain to handle $i = -1$.

Results are shown in Table~\ref{table:isr}. the ISR baseline achieves an average F1 score of 96.23 on PTB, which is lower than our method (96.34). This ablation demonstrates that the attach-juxtapose transition system contributes to the performance.

\begin{table}[ht]
  \caption{Comparison on PTB between different transition systems. Both models use XLNet for encoding tokens and GCNs for learning graph features.}
  \label{table:isr}
  \setlength\tabcolsep{3pt}
  \small
  \centering
  \begin{tabular}{@{}llllll@{}}
    \toprule
    Transition system & EM & F1 \\
    \midrule
    ISR~\cite{liu2017order} & 58.99 $\pm$ 0.11 & 96.23 $\pm$ 0.04 \\
    Attach-juxtapose & \textbf{59.17} $\pm$ 0.33 & \textbf{96.34} $\pm$ 0.03  \\
    \bottomrule
  \end{tabular}
\end{table}

\smallsec{Effect of graph neural networks}
A key ingredient of our model is using GNNs to effectively leverage structural information in partial trees. We conduct an ablation study to demonstrate its importance. Specifically, we keep the encoder fixed and replace the graph-based decoder with a simple two-layer network. For each new token, it predicts an action from the token feature alone---no partial tree is built. It predicts \texttt{target\_node} as an integer in $[0, 249]$. We increase the feature dimensions so that both models have the same number of parameters. Results are in Table~\ref{table:decoder}; the graph-based decoder achieves better performance in all settings, which demonstrates the value explicit structural information.

\begin{table}[ht]
  \caption{Ablation study comparing our graph-based action decoder with a sequence-based decoder that cannot leverage structural information in partial trees. The graph-based decoder leads to better performance in all settings, which demonstrates the importance of explicit structural information.}
  \label{table:decoder}
  \setlength\tabcolsep{3pt}
  \small
  \centering
  \begin{tabular}{@{}llllllllll@{}}
    \toprule
    Decode & \multicolumn{2}{c}{BERT (PTB)} & \multicolumn{2}{c}{XLNet (PTB)} & \multicolumn{2}{c}{BERT (CTB)} \\
    \cmidrule(r){2-3} \cmidrule(r){4-5} \cmidrule(r){6-7} 
    & EM & F1 & EM & F1 & EM & F1 \\
    \midrule
    Sequence-based & 53.26 $\pm$ 0.45 & 94.89 $\pm$ 0.03 & 55.84 $\pm$ 0.53 & 95.54 $\pm$ 0.07 & 44.14 $\pm$ 0.88 & 90.98 $\pm$ 0.35 \\
    Graph-based & \textbf{57.29} $\pm$ 0.57 & \textbf{95.79} $\pm$ 0.05 & \textbf{59.17} $\pm$ 0.33 & \textbf{96.34} $\pm$ 0.03 & \textbf{49.72} $\pm$ 0.83 & \textbf{93.59} $\pm$ 0.26 \\
    \bottomrule
  \end{tabular}
\end{table}

\section{Conclusion}

We proposed the attach-juxtapose transition system for constituency parsing. It is inspired by the strong incrementality of human parsing discovered by psycholinguistics. We presented theoretical results characterizing its capability and its connections with existing shift-reduce systems. Further, we developed a parser based on it and achieved state-of-the-art performance on two standard benchmarks.

\section*{Broader Impact}

We evaluated our method on constituency parsing for English and Chinese. They are the two most spoken languages, with more than two billion speakers across the globe. However, there are more than 7,000 languages in the world. And it is important to deliver parsing and other NLP technology to benefit speakers of diverse languages. Fortunately, our method can be applied to many languages with little additional effort. We developed the system using PTB, and when adding CTB, we only had to make a few minor changes in language-specific preprocessing.

However, a potential barrier is the lack of training data for low-resource languages. Our method relies on supervised learning with a large number of annotated parse trees, which are available only for some languages. A potential solution is to do joint multilingual training as in Kitaev~et~al.~\cite{kitaev2019multilingual}.

\begin{ack}

This work is partially supported by the National Science Foundation under Grant IIS-1903222 and the Office of Naval Research under Grant N00014-20-1-2634.

\end{ack}

\medskip

{\small
\bibliographystyle{ieee_fullname}
\bibliography{egbib}
}

\setcounter{table}{0}
\renewcommand{\thetable}{\Alph{table}}
\setcounter{figure}{0}
\renewcommand{\thefigure}{\Alph{figure}}
\setcounter{theorem}{0}

\medskip

\appendix

\section*{Appendix A: Proofs about Oracle Actions}
\label{appendix:proofs1}

\newtheorem{corollary}{Corollary}

Given a constituency tree without unary chains, we prove the existence of oracle actions (Theorem~\ref{thm:existence}) by proving the correctness of an algorithm (Algorithm~\ref{alg:oracles}) for computing oracle actions. Further, we prove that the oracle action sequence is unique. Before diving into the theorems and proofs, we first define the relevant terms:
\begin{definition}[Constituency tree]
\label{def:tree}
Given a sentence $s=[w_0, w_1, \dots, w_{n-1}]$ of length $n$, we define a constituency tree for $s$ as a rooted tree with arbitrary branching factors. It has $n$ leaves labeled with tokens $w_0, w_1, \dots w_{n-1}$ from left to the right, whereas its internal nodes are labeled with syntactic categories. The root node must be an internal node. In the degenerated case of $n = 0$, we define a special constant \texttt{empty\_tree} to be the constituency tree for $s = []$.
\end{definition}
\begin{definition}[Unary chain]
Let $T$ be a constituency tree. We say $T$ contains unary chains if there exist two internal nodes $x$ and $y$ such that $y$ is the only child of $x$. Conversely, if such $x$ and $y$ do not exist, we say $T$ does not contain unary chains.
\end{definition}

Then we present Algorithm~\ref{alg:oracles} for computing oracle actions. Given a constituency tree $T$ without unary chains, it recursively finds and undoes the last action until $T$ becomes \texttt{empty\_tree}. The algorithm has a time complexity of $\mathcal{O}(n\log{n})$, where $n$ is the sentence length. It calls \textit{last\_action} $n$ times, and each call needs $\mathcal{O}(\log{n})$ for locating the last leaf of the tree. 

Now we are ready to state and prove Theorem~\ref{thm:existence} and Theorem~\ref{thm:uniqueness} in the main paper.

\begin{algorithm}[h]
\DontPrintSemicolon
\SetAlgoLined
\SetKwInOut{Input}{Input}
\SetKwInOut{Output}{Output}
\SetStartEndCondition{ }{}{}
\SetKwProg{Fn}{def}{\string:}{}
\SetKwFunction{OracleActionSequence}{oracle\_action\_sequence}
\SetKwFunction{LastAction}{last\_action}
\SetKw{KwTo}{in}\SetKwFor{For}{for}{\string:}{}
\SetKwIF{If}{ElseIf}{Else}{if}{:}{elif}{else:}{}
\SetKwFor{While}{while}{:}{fintq}
\newcommand{\forcond}{$i$ \KwTo\Range{$n$}}
\AlgoDontDisplayBlockMarkers\SetAlgoNoEnd\SetAlgoNoLine
\Fn(){\OracleActionSequence{$T$}}{
  \If{$T == \texttt{empty\_tree}$}{
    \KwRet{[]}
  }
  \Else{
    $a_{n-1} = \LastAction{T}$ \;
    $T'= $ Undo the last action $a_{n-1}$ on $T$ \; 
    \KwRet{\OracleActionSequence{$T'$} + [a]}
  }
}
\;
\Fn(){\LastAction{$T$}}{
  \textit{last\_leaf} = The last (rightmost) leaf in $T$ \;
  \If{\textit{last\_leaf} \textnormal{has siblings}}{
    \textit{parent\_label} = \texttt{None}\;
    \textit{last\_subtree} = \textit{last\_leaf}
  }
  \Else{
    \textit{parent\_label} = The label of \textit{last\_leaf}'s parent \;
    \textit{last\_subtree} = \textit{last\_leaf}'s parent
  }
  \;
  \If{\textit{last\_subtree} \textnormal{is the root node}}{
    \KwRet{\texttt{attach(0, parent\_label)}}
  }
  \ElseIf{\textit{last\_subtree} \textnormal{has exactly one sibling \textbf{and} its sibling is an internal node}}{
    \textit{new\_label} = The label of \textit{last\_subtree}'s parent\;
    \textit{target\_node} = The index of \textit{last\_subtree}'s sibling\;
    \KwRet{\texttt{juxtapose(target\_node, parent\_label, new\_label)}}
    }
  \Else{
    \textit{target\_node} = The index of \textit{last\_subtree}'s parent\;
    \KwRet{\texttt{attach(target\_node, parent\_label)}}
  }
}
 \caption{Computing the oracle actions for a constituency tree without unary chains}
 \label{alg:oracles}
\end{algorithm}

\begin{theorem}[Existence of oracle actions]
\label{thm:existence}
Let $T$ be a constituency tree for a sentence of length $n$. If $T$ does not contain unary chains, there exists a sequence of actions $a_0, a_1, \dots, a_{n-1}$ such that $\texttt{empty\_tree}(a_0)(a_1)\dots(a_{n-1}) = T$. And this sequence of actions can be computed via Algorithm~\ref{alg:oracles}.
\end{theorem}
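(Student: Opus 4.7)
The plan is to prove Theorem~\ref{thm:existence} by strong induction on the sentence length $n$, essentially verifying that Algorithm~\ref{alg:oracles} terminates with the desired output. The base case $n=0$ is immediate: $T = \texttt{empty\_tree}$ and the empty action sequence works. For the inductive step, I would assume the claim holds for all trees with fewer than $n$ leaves and show that the algorithm correctly extracts a last action $a_{n-1}$, that undoing $a_{n-1}$ produces a tree $T'$ on $n-1$ leaves with no unary chains, and that $T'(a_{n-1}) = T$. Induction applied to $T'$ then delivers the full sequence.

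The core of the proof is a case analysis driven by the structure around the rightmost leaf of $T$, mirroring the branches of \textit{last\_action}. First I would let \textit{last\_subtree} be either the rightmost leaf itself (if it has siblings) or its parent (if it is an only child); this dichotomy determines whether \textit{parent\_label} is absent or equals the label of that parent. Then three subcases arise: (a) \textit{last\_subtree} is the root, in which case $T$ must consist of a single leaf possibly under a unary-wrapped root, and $a_{n-1} = \texttt{attach}(0,\textit{parent\_label})$ applied to \texttt{empty\_tree} reproduces it; (b) \textit{last\_subtree} has exactly one sibling which is internal, so the parent of \textit{last\_subtree} was introduced by a \texttt{juxtapose} with \textit{new\_label} equal to that parent's label, targeting the sibling; (c) otherwise, \textit{last\_subtree}'s parent existed before the last action, so $a_{n-1}$ is a pure \texttt{attach} onto that parent. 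In each subcase I would explicitly describe $T'$ (remove the new leaf and any freshly introduced \textit{parent\_label} node, and in case (b) collapse the juxtaposed ancestor back into \textit{target\_node}) and then verify directly from the definitions in Section~\ref{sec:transition_system} that executing $a_{n-1}$ on $T'$ yields $T$.

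The main obstacle is showing that $T'$ remains free of unary chains, since this is the exact hypothesis needed to invoke the inductive hypothesis. The delicate sub-case is (b): when we remove the juxtaposition, the ancestor labeled \textit{new\_label} disappears and its internal sibling becomes the rightmost child of the former grandparent. I would argue that in $T$ this grandparent already had \textit{new\_label} as one child and at least one other child (everything left of the rightmost chain is unaffected), and that the sibling being internal means it has at least two children of its own, so no new parent-with-single-child configuration is created above or at the reattached node. The attach cases are easier: in (c), removing a leaf or a degree-one wrapper cannot create a unary chain because the parent retained at least two children by construction (otherwise we would have landed in case (a) or (b)); in (a), $T'$ is \texttt{empty\_tree}, which vacuously has no unary chains.

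Finally, I would check that the algorithm terminates in $n$ recursive calls because each invocation strictly reduces the number of leaves, and that the returned sequence, read in the order produced by the recursion, satisfies $\texttt{empty\_tree}(a_0)(a_1)\dots(a_{n-1}) = T$. This last equality follows by composing the per-step identities $T'(a_{n-1}) = T$ down the recursion. With all cases handled and the unary-chain invariant preserved, both the existence claim and the constructive content of Algorithm~\ref{alg:oracles} are established.
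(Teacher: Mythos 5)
Your proposal follows essentially the same route as the paper's proof: induction on $n$, extracting and undoing the last action via a case analysis that mirrors the branches of \textit{last\_action} (the leaf-has-siblings dichotomy followed by the root/juxtapose/attach trichotomy), with the key burden being preservation of the no-unary-chain invariant so the inductive hypothesis applies. The argument is correct; your treatment of the juxtapose case in fact spells out the invariant-preservation reasoning in more detail than the paper, which delegates it to figures.
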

\begin{proof}
We prove the correctness of Algorithm~\ref{alg:oracles} by induction on the sentence length $n$. 

When $n = 0$, we have $T = \texttt{empty\_tree}$ (Definition~\ref{def:tree}), and $\textit{oracle\_action\_sequence}(T)$ returns an empty action sequence $[]$. The conclusion holds straightforwardly.

When $n > 0$, it is sufficient to prove $T'$ is a valid constituency tree without unary chains for a sentence of length $n-1$. We proceed by enumerating all possible execution traces in \textit{last\_action}. The function contains two conditional statements and therfore $2 \times 3 = 6$ execute traces. We use ``Case $i$-$j$'' to denote the execution trace taking the $i$th branch in the first conditional statement and the $j$th branch in the second conditional statement.
\begin{itemize}[leftmargin=*]
\item Case 1-1---\textit{last\_leaf} has siblings, and \textit{last\_subtree} is the root node. \\
We have \textit{last\_subtree} = \textit{last\_leaf} (the first conditional statement). So \textit{last\_leaf} is the root node while being a leaf, which contradicts with the assumption that $T$ is a constituency tree (Definition~\ref{def:tree}).
\begin{figure}[h]
  \centering
  \includegraphics[width=0.8\linewidth]{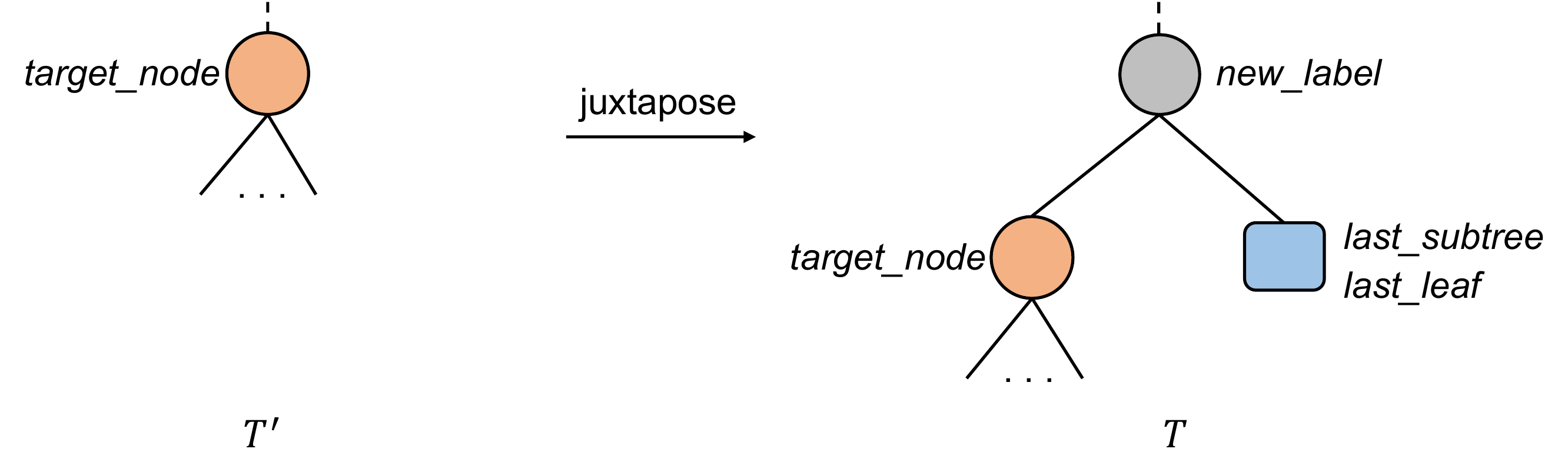}
  \caption{Case 1-2. The last action is a \texttt{juxtapose}.}
  \label{fig:proof_1_2}
\end{figure}
\item Case 1-2---\textit{last\_leaf} has siblings; \textit{last\_subtree} is not the root node; \textit{last\_subtree} has exactly one sibling, and its sibling is an internal node. \\
We have \textit{last\_subtree} = \textit{last\_leaf} (the first conditional statement). The local configuration of $T$ looks like Fig.~\ref{fig:proof_1_2} \emph{Right}, on its left is $T'$ obtained from $T$ by undoing action \texttt{juxtapose}(\textit{target\_node}, \textit{parent\_label}, \textit{new\_label}). $T'$ is still a valid constituency tree without unary chains.
\begin{figure}[h]
  \centering
  \includegraphics[width=0.8\linewidth]{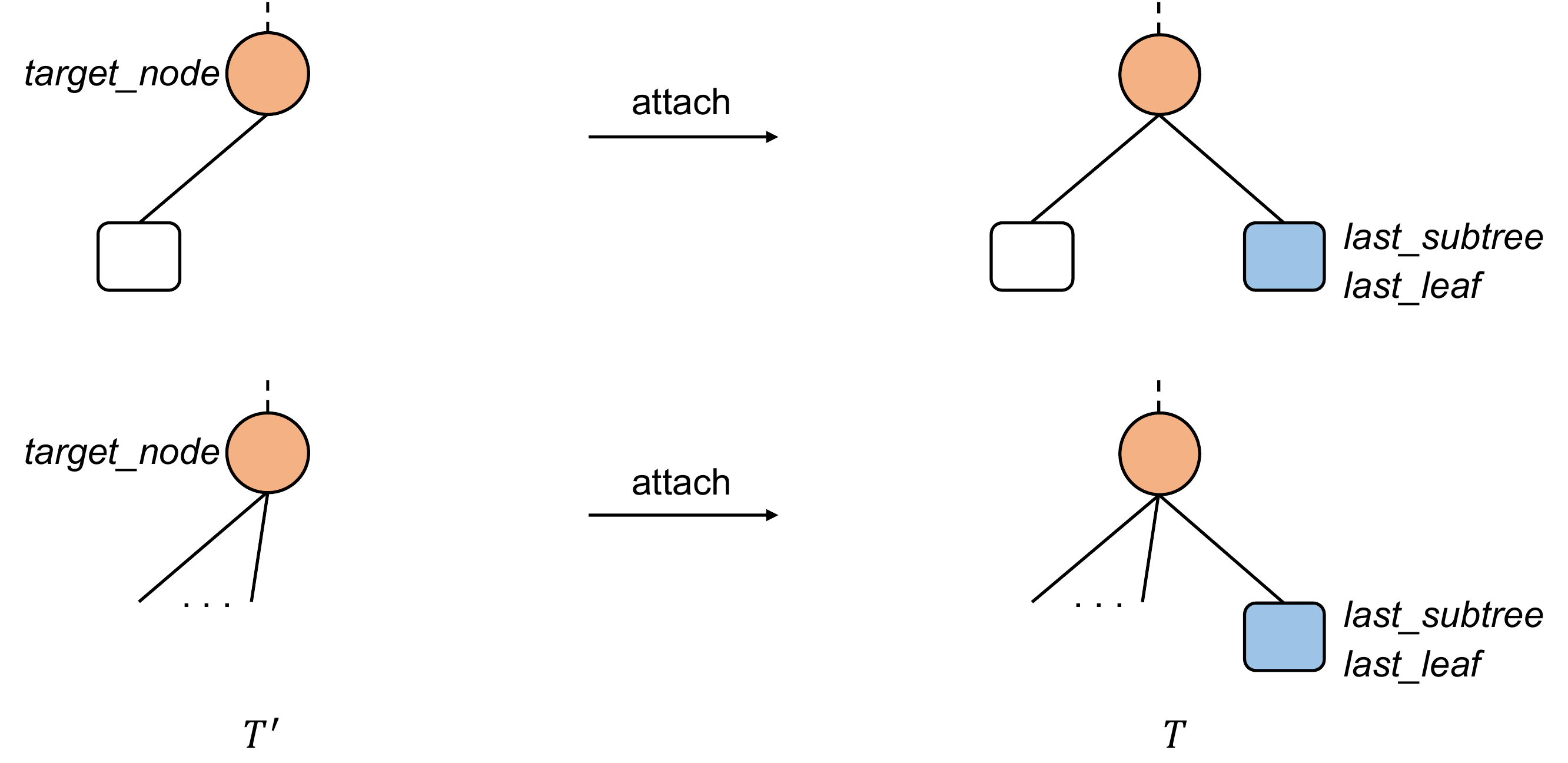}
  \caption{Case 1-3. There are two possible cases depending on the number of siblings of \textit{last\_subtree}. In both cases, the last action is an \texttt{attach}.}
  \label{fig:proof_1_3}
\end{figure}
\item Case 1-3---\textit{last\_leaf} has siblings; \textit{last\_subtree} is not the root node; \textit{last\_subtree} has either no sibling, one leaf node as its sibling, or more than one siblings. \\
We have \textit{last\_subtree} = \textit{last\_leaf} (the first conditional statement). So \textit{last\_subtree} has either one leaf node or more than one nodes as its siblings. These two cases are shown separately in Fig.~\ref{fig:proof_1_3}. In both cases, $T'$ is still a valid constituency tree without unary chains.
\begin{figure}[h]
  \centering
  \includegraphics[width=0.8\linewidth]{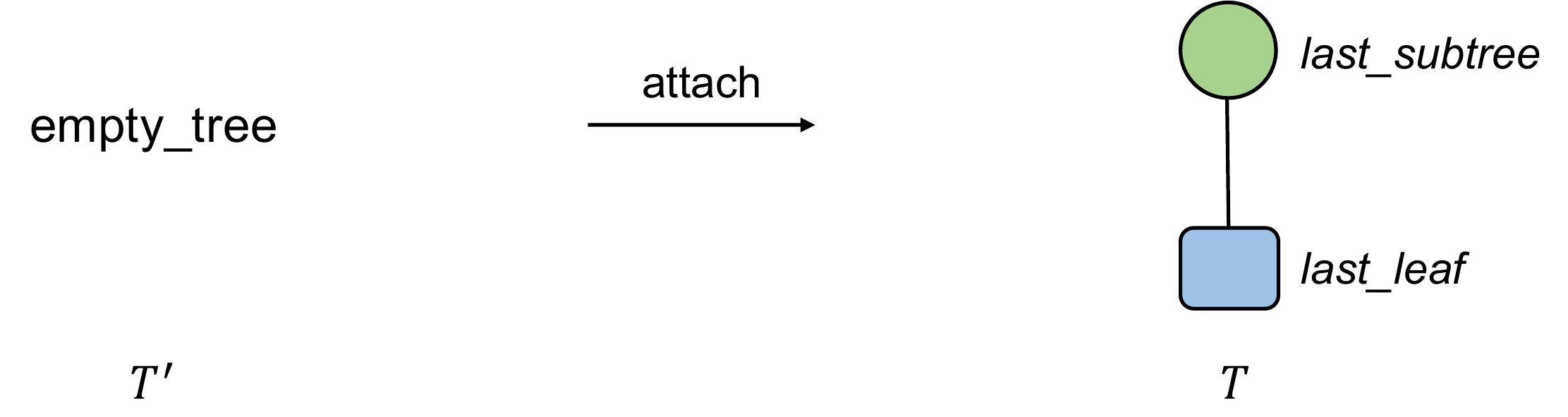}
  \caption{Case 2-1. $T'$ = \texttt{empty\_tree} and \textit{last\_subtree} is the root of $T$. The last action is an \texttt{attach}.}
  \label{fig:proof_2_1}
\end{figure}
\item Case 2-1---\textit{last\_leaf} has no sibling, and \textit{last\_subtree} is the root node. \\
We have \textit{last\_subtree} = \textit{last\_leaf}'s parent (the first conditional statement). As shown in Fig.~\ref{fig:proof_2_1}, $T'$ is \texttt{empty\_tree} in this case, which is also a valid constituency tree without unary chains.
\begin{figure}[h]
  \centering
  \includegraphics[width=0.8\linewidth]{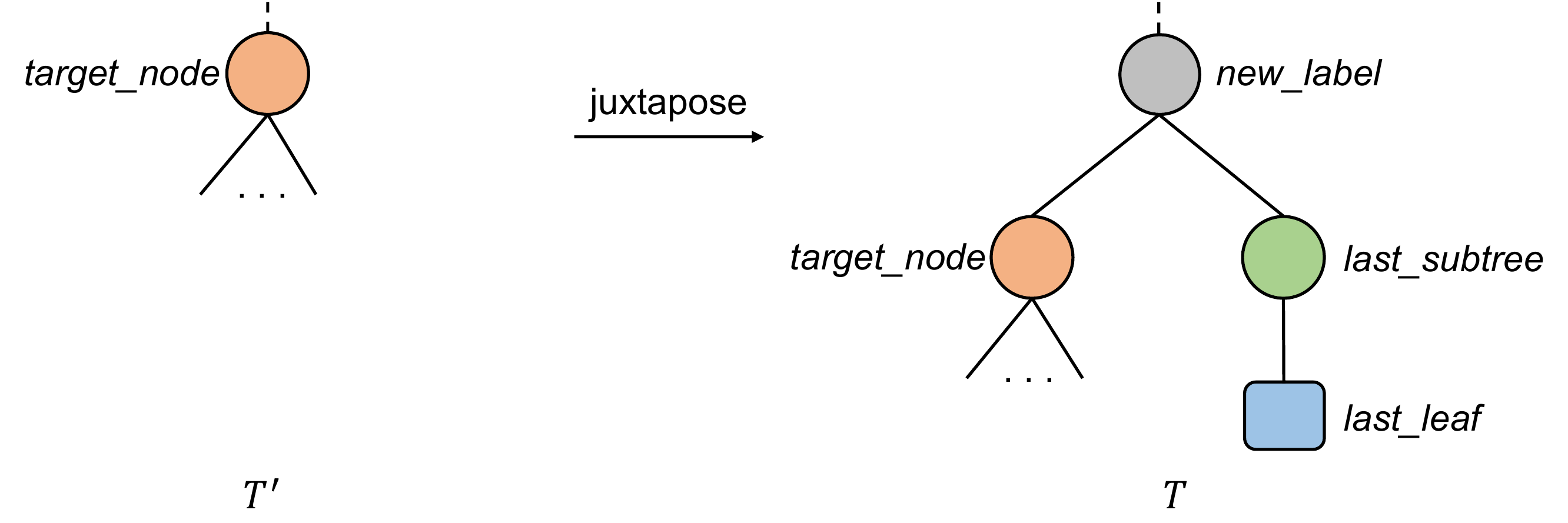}
  \caption{Case 2-2. The last action is an \texttt{juxtapose}.}
  \label{fig:proof_2_2}
\end{figure}
\item Case 2-2---\textit{last\_leaf} has no sibling; \textit{last\_subtree} is not the root node; \textit{last\_subtree} has exactly one sibling, and its sibling is an internal node. \\
We have \textit{last\_subtree} = \textit{last\_leaf}'s parent (the first conditional statement). As Fig.~\ref{fig:proof_2_2} shows, $T'$ is still a valid constituency tree without unary chains.
\begin{figure}[h]
  \centering
  \includegraphics[width=0.8\linewidth]{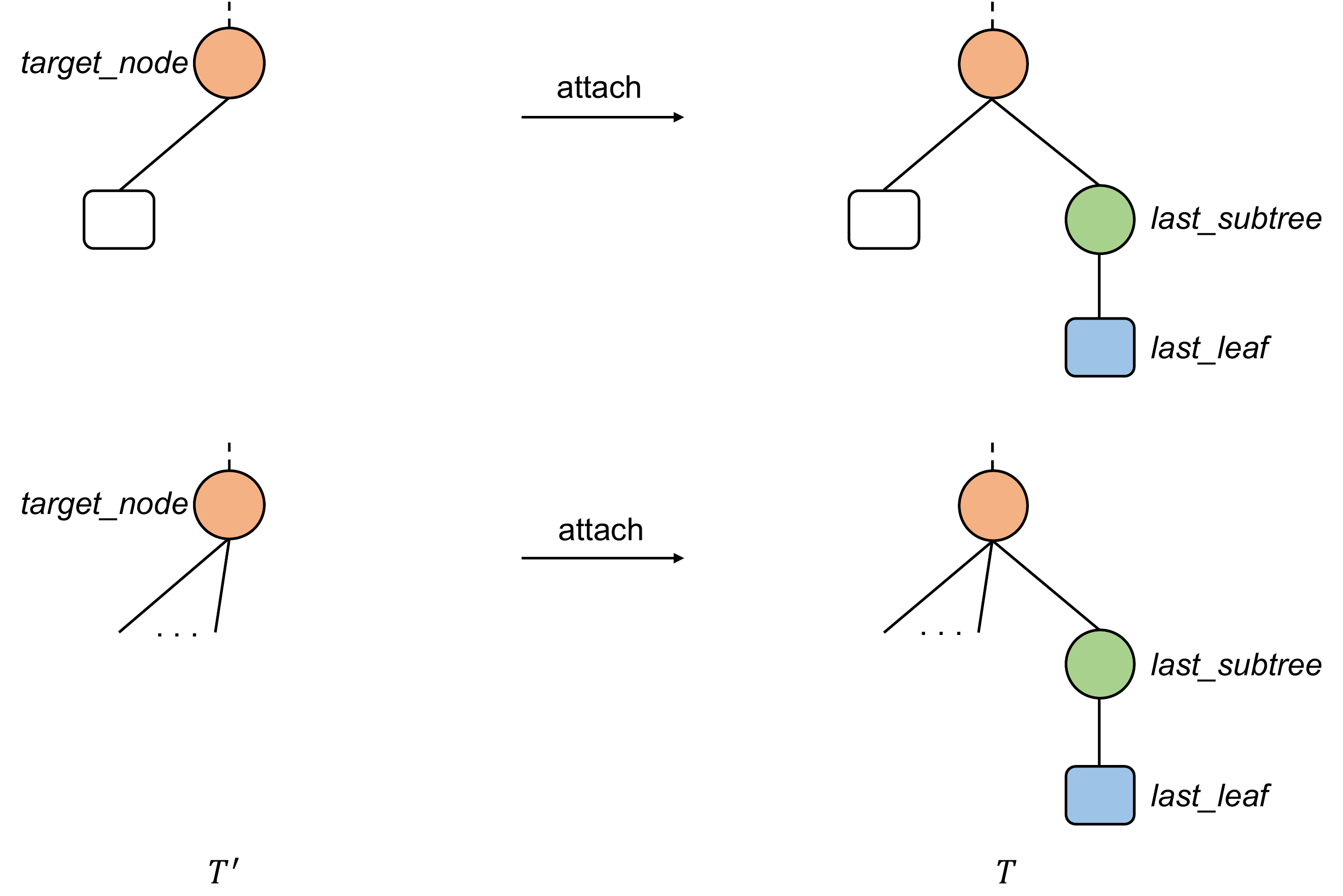}
  \caption{Case 2-3. There are two possible cases depending on the number of siblings of \textit{last\_subtree}. In both cases, the last action is an \texttt{attach}.}
  \label{fig:proof_2_3}
\end{figure}
\item Case 2-3---\textit{last\_leaf} has no sibling; \textit{last\_subtree} is not the root node; \textit{last\_subtree} has either no sibling, one leaf node as its sibling, or more than one siblings. \\
We have \textit{last\_subtree} = \textit{last\_leaf}'s parent (the first conditional statement). So \textit{last\_subtree} is an internal node. Since $T$ does not contain unary chains, any non-root internal node must have siblings. As a result, \textit{last\_subtree} has either one leaf node or more than one nodes as its sibling. These two cases are shown separately in Fig.~\ref{fig:proof_2_3}. In both cases, $T'$ is still a valid constituency tree without unary chains.
\end{itemize}
We have proved $T'$ to be a valid constituency tree for a sentence of length $n-1$ no matter which execution trace \textit{last\_action} takes. Applying the induction hypothesis, we know $\textit{oracle\_action\_sequence}(T')$ outputs a sequence of actions $a_0, a_1, \dots, a_{n-2}$ such that $\texttt{empty\_tree}(a_0)\dots(a_{n-2}) = T'$. Since $T'(a_{n-1})=T$, we have finally derived $\texttt{empty\_tree}(a_0)\dots(a_{n-1}) = T$.

\end{proof}

\begin{theorem}[Uniqueness of oracle actions]
\label{thm:uniqueness}
Let $T$ be a constituency tree for a sentence of length $n$, and $T$ does not contain unary chains. If \textit{oracle\_action\_sequence}($T$) = $a_0, a_1, \dots, a_{n-1}$, it is the only action sequence that satisfies $\texttt{empty\_tree}(a_0)(a_1)\dots(a_{n-1}) = T$.
\end{theorem}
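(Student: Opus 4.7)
The plan is to prove uniqueness by strong induction on the sentence length $n$, paralleling the existence argument of Theorem~1. The base case $n=0$ is immediate since $T=\texttt{empty\_tree}$ admits only the empty sequence. For the inductive step, it suffices to show that the last action $a_{n-1}$ is uniquely determined by $T$: once this is established, the tree $T'$ obtained by undoing $a_{n-1}$ is a constituency tree for a sentence of length $n-1$ without unary chains (by the same case analysis already carried out in the proof of Theorem~1), and the induction hypothesis pins down $a_0,\dots,a_{n-2}$ uniquely.

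To see that $a_{n-1}$ is forced by $T$, I would first observe that every action integrates its new token as the rightmost leaf of the resulting tree, so the token added by $a_{n-1}$ must be $\ell=w_{n-1}$, the last leaf of $T$. From the definitions of \texttt{attach} and \texttt{juxtapose}, the parameter \texttt{parent\_label} equals \texttt{None} if and only if $\ell$ has at least one sibling in $T$: when \texttt{parent\_label} is non-\texttt{None}, the new token becomes the sole child of a freshly created internal parent, whereas when it is \texttt{None}, the new token is inserted next to the previously existing children of \textit{target\_node}. Thus \texttt{parent\_label} is forced by the local structure of $T$ around $\ell$, and this simultaneously identifies the node \textit{last\_subtree} (either $\ell$ itself or its parent) that was created by the last action.

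It then remains to decide between \texttt{attach} and \texttt{juxtapose} and to pin down \textit{target\_node}. A \texttt{juxtapose} step produces a freshly created internal node whose two children are the old \textit{target\_node} (which must itself be internal) and the new subtree; an \texttt{attach} step instead leaves the parent of \textit{target\_node} untouched and merely appends a new rightmost child. The only configuration where both readings structurally fit is when \textit{last\_subtree} has exactly one sibling that happens to be an internal node; here the no-unary-chains hypothesis on $T$ is decisive, because the \texttt{attach} reading would leave that internal sibling as the sole internal child of its parent in the undone tree $T'$, creating a unary chain there and (via the inductive argument) contradicting reachability from $\texttt{empty\_tree}$. All other local configurations admit at most one reading by direct inspection, matching exactly the three branches of Algorithm~1.

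I expect the main difficulty to lie not in any single case but in presenting the case analysis compactly without re-deriving the six subcases of Theorem~1's proof. My plan is to phrase the core step as one short observation: the three discriminants used by \textit{last\_action}---whether $\ell$ has siblings, whether \textit{last\_subtree} has exactly one sibling that is internal, and whether \textit{last\_subtree} is the root---are intrinsic properties of $T$ alone, and together they force exactly the branch that Algorithm~1 takes. Uniqueness of $a_{n-1}$ follows, and the induction closes.
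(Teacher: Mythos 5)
Your proposal is correct and follows essentially the same route as the paper's proof: both reduce uniqueness to showing the last action is forced by the local structure around the last leaf (with \texttt{parent\_label} determined by whether the leaf has siblings, and the \texttt{attach}-vs-\texttt{juxtapose} ambiguity in the one-internal-sibling case resolved by the no-unary-chain invariant on reachable states), and then peel off actions one at a time. The paper phrases this as a proof by contradiction that enumerates the six execution traces of \textit{last\_action} and iterates backwards, while you phrase it as an induction on $n$ with the case analysis compressed into three discriminants, but the substance is identical.
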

\begin{proof}
We prove by contradiction. Assume there is different action sequence $a'_0, a'_1, \dots, a'_{n-1}$ that satisfies $\texttt{empty\_tree}(a'_0)(a'_1)\dots(a'_{n-1}) = T$. We first prove $a'_{n-1} = a_{n-1}$, in other words, $a_{n-1}$ is the only possible last action. Similar to Theorem~\ref{thm:existence}, we prove by enumerating all execution traces.
\begin{itemize}[leftmargin=*]
\item Case 1-1---\textit{last\_leaf} has siblings, and \textit{last\_subtree} is the root node. \\
Similarly, it contradicts with $T$ being a constituency tree.
\item Case 1-2---\textit{last\_leaf} has siblings; \textit{last\_subtree} is not the root node; \textit{last\_subtree} has exactly one sibling, and its sibling is an internal node. \\
In Fig.~\ref{fig:proof_1_2} \emph{Right}, it is apparent that \textit{parent\_label} = \texttt{None}. Also, $a'_{n-1}$ must be a \texttt{juxtapose} action since otherwise the gray node will introduce an unary chain in $T'$. Therefore, $a'_{n-1} = a_{n-1}$.
\item Case 1-3---\textit{last\_leaf} has siblings; \textit{last\_subtree} is not the root node; \textit{last\_subtree} has either no sibling, one leaf node as its sibling, or more than one siblings. \\
In Fig.~\ref{fig:proof_1_3}, \textit{parent\_label} = \texttt{None}. No matter how many siblings \textit{last\_subtree} has, $a'_{n-1}$ must be an \texttt{attach} action. Therefore, $a'_{n-1} = a_{n-1}$.
\end{itemize}
The remaining three cases are similar, and we omit the details. Now that we have proved $a'_{n-1} = a_{n-1}$, it is straightforward to apply the same reasoning to derive $a'_{n-2} = a_{n-2}$, $a'_{n-3} = a_{n-3}$ and all the way until $a'_0 = a_0$. This contradicts with the assumption that $a'_0, a'_1, \dots, a'_{n-1}$ is different from $a_0, a_1, \dots, a_{n-1}$. Therefore, we have an unique sequence of oracle actions $a_0, a_1, \dots, a_{n-1}$.

\end{proof}

\section*{Appendix B: Proofs about Connections with In-order Shift-reduce System}
\label{appendix:proofs2}

We prove Theorem~\ref{thm:equiv} and Theorem~\ref{thm:transition} in the main paper; they reveal the connections between our system with In-order Shift-reduce System (ISR) proposed by Liu~and~Zhang~\cite{liu2017order}. Before any formal derivation, we first illuminate the connections through an example to gain some intuition.

\begin{figure}[h]
  \centering
  \vspace{-3mm}
  \includegraphics[width=1.0\linewidth]{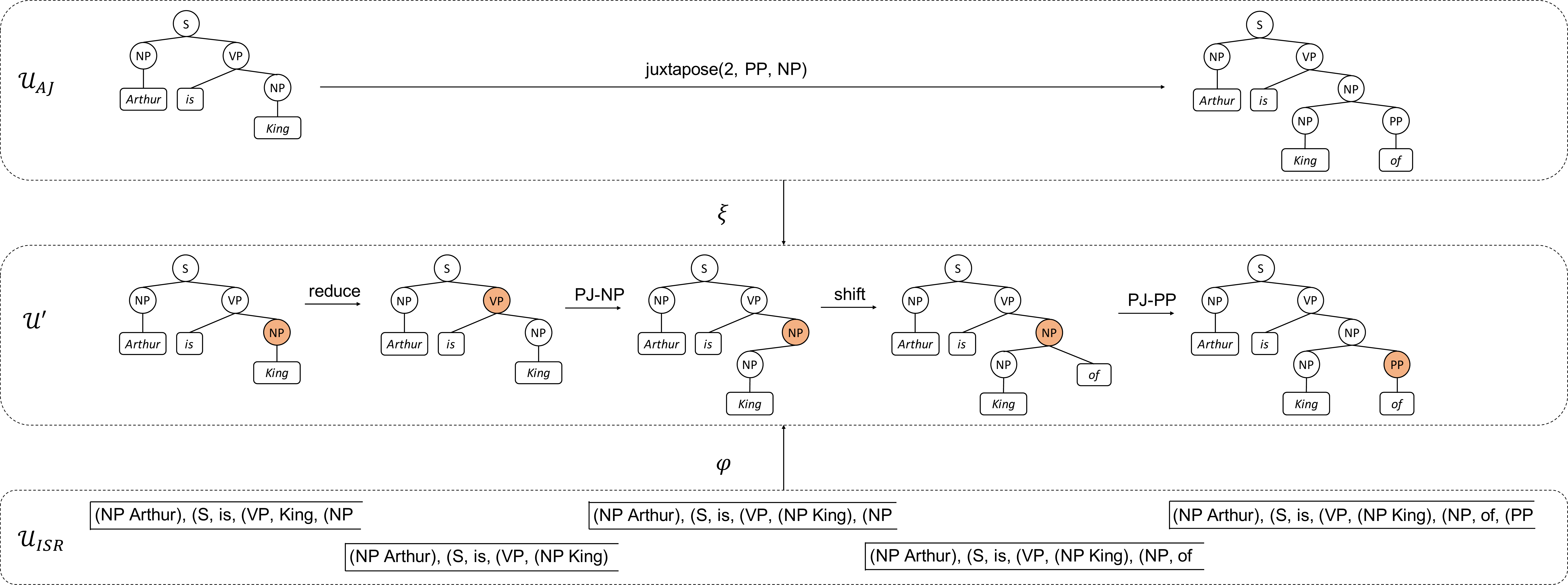}
  \caption{An \texttt{juxtapose} action when parsing ``Arthur is King of the Britons.'' in our attach-juxtapose system. It can be translated into 4 actions in In-order Shift-reduce System.}
  \vspace{-2mm}
  \label{fig:juxtapose}
\end{figure}

Fig.~\ref{fig:juxtapose} shows a \texttt{juxtapose} action when parsing ``Arthur is King of the Britons.'' in our system, which can be translated into 4 actions in ISR: \texttt{reduce}, \texttt{PJ-NP}, \texttt{shift}, \texttt{PJ-PP}. In the figure, $\mathcal{U}_{\textit{AJ}}$ denotes our state space---the set of partial trees without unary chains. Whereas $\mathcal{U}_{\textit{ISR}}$ denotes ISR's state space---the set of legal stack configurations. We represent a stack from left (stack bottom) to the right (stack top). ``(X'' denotes a projected nonterminal X, while an S-expression such as ``(NP Arthur)'' denotes a subtree in the stack. $\mathcal{U}'$ denotes the augmented space of partial trees; each element in $\mathcal{U}'$ is a constituency tree that may have a node on the rightmost chain marked as special (\emph{orange}). We observe a one-to-one correspondence between $\mathcal{U}'$ and $\mathcal{U}_{\textit{ISR}}$, which is denoted by the mapping $\varphi$. 

We proceed to generalize this example to arbitrary state transitions in our system, which involves formulating and proving Theorem~\ref{thm:equiv} and Theorem~\ref{thm:transition} in the main paper. First, we assume a fixed sentence and define the state spaces of our system and ISR:
\begin{definition}[Space of partial trees]
Given the sentence $[w_0, w_1, \dots, w_{n-1}]$, we define the space of partial trees to be:
\begin{equation}
\mathcal{U} = \{t ~|~ \exists~0 \leq m \leq n, \textnormal{ s.t. } t \textnormal{ is a constituency tree for } [w_0, w_1, \dots, w_{m-1}]\}.
\end{equation}
From theorem~\ref{thm:existence}, we know that the state space of our system is a subset of $\mathcal{U}$ that does not contain unary chains:
\begin{equation}
    \mathcal{U}_{\textit{AJ}} = \{t ~|~ t \in \mathcal{U}, t \textnormal{ does not contain unary chains} \}.
\end{equation}
\end{definition}
\begin{definition}[Augmented space of partial trees]
\label{def:aug}
Given the sentence $[w_0, w_1, \dots, w_{n-1}]$. We define the augmented space of partial trees to be:
\begin{equation}
\mathcal{U}' = \{(t, i) ~|~ t \in \mathcal{U}, i \in \mathbb{Z}, -1 \leq i < L(t) \},
\end{equation}
where $L(t)$ denotes the number of internal nodes on the rightmost chain of $t$. We can define an injective mapping $\xi: \mathcal{U}_{\textit{AJ}} \rightarrow \mathcal{U}'$:
\begin{equation}
    \xi (t) = (t, L(t) - 1).
\end{equation}
\end{definition}

In ISR, the parser can be trapped in a state that will never lead to a complete tree no matter what actions it takes, e.g., the states where two tokens are at the bottom of the stack. We call such states illegal states and prove a lemma characterizing the set of legal states.

\begin{lemma}[Legal states in ISR]
\label{thm:legal}
Let $s$ be a stack configuration in In-order Shift-reduce System, it is a legal state if and only if you can end up with a single partial tree in the stack by repeatedly executing \texttt{reduce}.
\end{lemma}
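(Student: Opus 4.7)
The approach exploits a structural invariant on legal ISR stacks together with the LIFO nature of projected nonterminals. By induction on the length of the action sequence producing a reachable stack from the initial empty stack, every legal stack has the nested form
\[
T_0,\ (X_1,\ T_{1,1},\ \ldots,\ T_{1,j_1},\ (X_2,\ \ldots,\ (X_k,\ T_{k,1},\ \ldots,\ T_{k,j_k})\cdots),
\]
where each $T_{i,j}$ (and $T_0$) is a completed subtree for a contiguous input span, each $(X_i$ is an open projected nonterminal, and $j_i \geq 1$ for every $i < k$ (since a PJ in the in-order system can only follow a shift or a reduce, never another PJ). This form is preserved by each ISR action: shift appends a new $T$, PJ-X opens a new innermost projection, and reduce collapses the innermost projection $(X_k$ together with the subtrees it scopes into a single subtree. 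A direct corollary is that $k$ successive reduces collapse such a stack to a single subtree.

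For the backward direction (if), suppose applying reduces alone from $s$ yields a stack holding a single subtree $t$. By the reduce semantics and the invariant, $t \in \mathcal{U}$ is a valid constituency tree for a prefix $[w_0, \ldots, w_{m-1}]$ of the input. By Theorem~\ref{thm:existence}, there exists a complete constituency tree for $[w_0, \ldots, w_{n-1}]$; translating its in-order traversal into shifts, PJs, and reduces gives a sequence that extends $[t]$ (with buffer $[w_m, \ldots, w_{n-1}]$) to a final accepting state. Concatenating this with the initial reduces witnesses that $s$ is legal.

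For the forward direction (only if), suppose $s$ is legal and fix any action sequence $\sigma$ from $s$ to a final accepting state. Because the final stack has no open projections, each of the $k$ projections in $s$ must eventually be closed by some reduce in $\sigma$. Decompose $\sigma = \sigma_0\, r_1\, \sigma_1\, \cdots\, r_k\, \sigma_k$, where $r_i$ closes the $i$th-innermost projection of $s$ (innermost first, by LIFO), and each $\sigma_i$ is a ``balanced'' interlude of actions operating strictly above the surviving projections. The claim is that $r_1, \ldots, r_k$ apply directly to $s$ (skipping the $\sigma_i$'s), yielding a single partial tree.

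The main obstacle is showing each $r_i$ remains applicable once the intermediate $\sigma_i$'s are excised. The cleanest way is a bracket-matching argument within each $\sigma_i$: PJs pair with their closing reduces, and the net effect on the layer above the $i$th surviving projection is exactly one added subtree, which is precisely what $r_i$ consumed in $\sigma$. Since the structural invariant already guarantees at least one subtree above each surviving projection (via $j_i \geq 1$), the $r_i$'s are applicable in turn, and by the corollary above the stack collapses to a single partial tree, completing the proof.
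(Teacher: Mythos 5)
Your strategy for the ``only if'' direction is genuinely different from the paper's: the paper argues by contraposition that any stack on which repeated \texttt{reduce} gets stuck already contains a permanent ``offending pattern'' (several elements but no projected nonterminal, a projection at the stack bottom, or two adjacent projections) that no action can ever remove, so the state was illegal from the start; you instead perform surgery on an accepting run. The gap is at the foundation of your argument: the structural invariant. You claim that \emph{every legal stack} has the nested form with a \emph{single} subtree $T_0$ below the outermost open projection, justified ``by induction on the length of the action sequence producing a reachable stack from the initial empty stack.'' That induction does not go through. The form is not preserved by arbitrary actions on reachable states --- two consecutive \texttt{shift}s from the empty stack already yield two completed subtrees at the bottom with no projection, violating the form --- and ``reachable'' is not the same as ``legal,'' so you cannot restrict the induction to legal states without first knowing which actions preserve legality, which is essentially the content of the lemma. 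Your parenthetical does justify $j_i \geq 1$ at the inner levels via the precondition of \texttt{PJ-X}, but the outermost condition (exactly one element beneath the first projection) is precisely what separates legal from illegal reachable stacks, and it is never established.

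This matters because that part of the invariant is load-bearing. Your excision argument does show the closing reduces $r_1, \dots, r_k$ stay applicable (the element directly below each surviving projection is untouched by the interludes, so the fact that it is a completed subtree can be read off the original run), but applicability alone only collapses everything from the outermost projection upward into one subtree; to conclude the whole stack becomes a \emph{single} partial tree you still need exactly one element underneath, which is the unproved clause. The repair is exactly the paper's observation: a second subtree at the stack bottom (or a projection at the bottom, or two adjacent projections) is a permanent defect, because every later projection is opened strictly above it and \texttt{reduce} never reaches past the nearest projection, so no accepting run can start from such a stack. Once you add that permanence argument, your invariant plus the corollary that $k$ reduces collapse the stack already gives the forward direction outright, and the excision machinery becomes unnecessary. (Your ``if'' direction is fine and matches the paper's, though invoking Theorem~\ref{thm:existence} is a detour: one \texttt{PJ-X}, the remaining \texttt{shift}s, and a final \texttt{reduce} complete the tree directly.)
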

\begin{proof}
For the ``if'' part, we fist keep executing \texttt{reduce} until there is only a single partial tree in the stack. Then we can arrive at a complete tree by executing one \texttt{PJ-X}, several \texttt{shift} to consume all remaining tokens, and one final \texttt{reduce}. Therefore, the stack $s$ is legal.

For the ``only if'' part, $s$ is legal. We prove by contradiction, assuming it is impossible to get a single partial tree by executing multiple \texttt{reduce}. Then we must be stuck somewhere. Referring to the definition of the \texttt{reduce} action~\cite{liu2017order}, there could be several reasons for being stuck: (1) The stack has more than one element but no projected nonterminal; (2) the only projected nonterminal is at the bottom of the stack; (3) there are two consecutive projected nonterminals. For all three cases, the offending pattern must also exist in the original $s$, and no action sequence can remove them from $s$. Therefore, $s$ must be illegal, which contradicts the assumption.
\end{proof}
Although there are illegal states in ISR, it is possible to avoid them using heuristics in practice. So we do not consider them a problem for ISR. In the following derivations, we safely ignore illegal states and assume ISR's state space to consist of only legal states:
\begin{definition}[Space of legal stack configurations]
Given the sentence $[w_0, w_1, \dots, w_{n-1}]$, we define $\mathcal{U}_{\textit{ISR}}$ as the set of legal stack configurations in In-order Shift-reduce System.
\end{definition}

As one of our main theoretical conclusions, ISR's state space is equivalent to the augmented space of partial trees; therefore, it is strictly larger than our state space:
\begin{theorem}[Connection in state spaces]
\label{thm:equiv}
There is a bijective mapping $\varphi: \mathcal{U}_{\textit{ISR}} \rightarrow \mathcal{U}'$ between the legal states in In-order Shift-reduce System and the augmented space of partial trees. 
\end{theorem}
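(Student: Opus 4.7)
The plan is to construct $\varphi$ and its inverse explicitly, then verify they compose to the identity. By Lemma~\ref{thm:legal}, any legal ISR stack collapses to a single partial tree under repeated \texttt{reduce} actions. This motivates defining $\varphi(s) = (t, i)$ where $t$ is the partial tree obtained by maximally reducing $s$, and $i$ is the depth on the rightmost chain of $t$ of the topmost projected nonterminal of $s$ (with the convention $i = -1$ when $s$ has no projected nonterminal, i.e., $s$ is either empty or a single complete subtree). The intuition, which I would verify as the first substantive step, is that in an in-order traversal the projected nonterminals present in the stack, read from bottom to top, correspond exactly to an initial segment of the rightmost chain of the tree one would obtain by maximal reduction.

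For the inverse, given $(t, i) \in \mathcal{U}'$ I would ``open'' the rightmost chain of $t$ from the root down to depth $i$. Concretely, let $v_0, v_1, \dots, v_i$ be the first $i+1$ internal nodes on the rightmost chain of $t$. The reconstructed stack lists, bottom-to-top, all left siblings of $v_1$ as completed subtrees, then the projected nonterminal ``(label$(v_0)$'', then the left siblings of $v_2$, then ``(label$(v_1)$'', and so on, ending with ``(label$(v_i)$'' followed by whatever descendants of $v_i$ lie to the right of all opened nonterminals. When $i = -1$ the stack is either empty (if $t = \texttt{empty\_tree}$) or the single subtree $t$. Well-definedness of $\varphi^{-1}$ as a map into $\mathcal{U}_{\textit{ISR}}$ follows from the ``if'' direction of Lemma~\ref{thm:legal}, since the constructed stack is clearly reducible.

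Mutual inversion splits into two directions. For $\varphi \circ \varphi^{-1} = \mathrm{id}_{\mathcal{U}'}$, I would show by a direct computation that reducing the constructed stack rebuilds $v_i$, then $v_{i-1}$, and so on up to the root, recovering $t$ exactly, and that the topmost projected nonterminal clearly sits at depth $i$. For $\varphi^{-1} \circ \varphi = \mathrm{id}_{\mathcal{U}_{\textit{ISR}}}$, I would induct on the number of projected nonterminals in $s$: reducing the topmost projected nonterminal and the material above it produces a smaller legal stack to which the hypothesis applies, and the ``opening'' step of $\varphi^{-1}$ reinserts exactly that projected nonterminal and the subtrees that had been above it.

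The main obstacle is establishing rigorously that the projected nonterminals in any legal stack line up, bottom-to-top, with an initial segment of the rightmost chain of the fully reduced tree, and that no two legal stacks can reduce to the same tree while having projected nonterminals at the same topmost depth. I would handle this by an induction on the length of an action sequence producing $s$ from the empty stack, carefully checking how each of \texttt{shift}, \texttt{PROJ-X}, and \texttt{reduce} modifies both the stack's projected-nonterminal spine and the rightmost chain of the associated tree. Once this correspondence is pinned down, the edge cases ($i = -1$, empty tree, a projected nonterminal at the very bottom of the stack) become routine.
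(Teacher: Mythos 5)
Your overall strategy matches the paper's: define $\varphi$ by maximally reducing a legal stack (justified by Lemma~\ref{thm:legal}) and pairing the resulting tree with an index recording how far down the rightmost chain the projected nonterminals reach, then build an explicit inverse by ``opening'' the rightmost chain and verify the two compositions. (The paper takes the second coordinate to be simply the number of projected nonterminals minus one, which is well-defined without first proving your ``initial segment of the rightmost chain'' claim; that correspondence then falls out of verifying the inverse. Your plan to check both composition identities is, if anything, more thorough than the paper, which only verifies one direction.)

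However, your explicit construction of $\varphi^{-1}$ is wrong for nodes with more than two children, and the trees here have arbitrary branching factors. Under the in-order discipline, a node $v_j$ with children $c_1,\dots,c_k$ contributes to the stack in the order $c_1$, then the projected nonterminal ``(label$(v_j)$'', then $c_2,\dots,c_{k-1}$, then (recursively) $c_k=v_{j+1}$: only the \emph{first} child sits below the projected nonterminal, because \texttt{reduce} pops the material above ``(X'' plus exactly one element beneath it, which becomes the first child of the new constituent. Your stack instead places \emph{all} left siblings of $v_{j+1}$ below ``(label$(v_j)$''. If $k\ge 3$, executing \texttt{reduce} on your stack produces a constituent whose first child is $c_{k-1}$ and leaves $c_1,\dots,c_{k-2}$ stranded beneath it with no projected nonterminal to absorb them; the stack then fails to reduce to $t$ and is in fact illegal by Lemma~\ref{thm:legal}. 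So $\varphi^{-1}$ as written does not land in $\mathcal{U}_{\textit{ISR}}$, and the check $\varphi\circ\varphi^{-1}=\mathrm{id}$ fails. The fix is exactly the paper's in-order traversal rule: when visiting an opened node, push its first subtree, then the projected nonterminal, then the remaining non-rightmost subtrees, and recurse only into the rightmost child. With that correction, the rest of your plan (induction on the number of projected nonterminals, respectively on the number of \texttt{reduce} steps) goes through as in the paper.
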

\begin{proof}
For a legal stack $s \in \mathcal{U}_{\textit{ISR}}$, let $L(s)$ be the number of projected nonterminals in $s$ (introduced by \texttt{PJ-X} actions in Liu~and~Zhang~\cite{liu2017order}). We are abusing the notation a little bit as we have used $L(t)$ to denote the number of internal nodes on the rightmost chain of a tree $t$. But as we will show, they are actually the same. If $s$ is an empty stack, let $t = \texttt{empty\_tree}$. Otherwise, let $t$ be the tree produced by repeatedly executing \texttt{reduce} until there is only one partial tree remaining in the stack. This is always possible since $s$ is a legal state (Lemma~\ref{thm:legal}). Then we can define $\varphi(s) = (t, L(s) - 1)$. We prove $\varphi$ is bijective by constructing its inverse mapping $\varphi^{-1}: \mathcal{U}' \rightarrow \mathcal{U}_{\textit{IRS}}$.

Given any $(t, i) \in \mathcal{U}'$, $t$ is a partial tree. We define the depth of a node in $t$ as its distance to the root node. Further, we extend the definition of in-order traversal from binary trees to trees with arbitrary branching factors: the first subtree  $\rightarrow$ root node $\rightarrow$ the second subtree $\rightarrow$ the third subtree $\rightarrow \dots$

We define a mapping $\gamma: \mathcal{U}' \rightarrow \mathcal{U}_{\textit{IRS}}$. Let $\gamma (t, i)$ be the stack obtained by starting with an empty stack and traversing the tree $t$ in-order: At any subtree rooted at node $x$, (1) if node $x$ is not on the rightmost chain or $\textit{depth}(x)>i$, we push the entire subtree $x$ onto the stack and skip traversing the nodes in it. (2) If node $x$ is on the rightmost chain and $\textit{depth}(x) \leq i$, we push node $x$'s label as a projected nonterminal and keep traversing the nodes in subtree $x$.

We now prove $\gamma$ to be the inverse of $\varphi$, i.e. $\varphi \circ \gamma (t, i) = (t, i)$. It is straightforward that the stack $\gamma (t, i)$ has $i + 1$ projected nonterminals, corresponding to nodes on the rightmost chain with depth $0, 1, \dots, i$. So, $L(\gamma (t, i)) - 1 = i$, and we only have to prove $t$ to be the tree obtained by repeatedly executing \texttt{reduce} on $\gamma (t, i)$. 

In the trivial case of $t = \texttt{empty\_tree}$, we have $L(t) = 0$, and $i$ must be $-1$. $\gamma (t, i)$ is an empty stack, and executing \texttt{reduce} on it will give \texttt{empty\_tree}, which equals to $t$.

In the non-trivial case of $t \neq \texttt{empty\_tree}$, we prove by induction on the number of \texttt{reduce} actions ($k$) executed on the stack $\gamma (t, i)$ to get a single tree.

When $k = 0$, $\gamma (t, i)$ contains a single tree, which must be $t$ itself. 

When $k > 0$, let $\gamma_1 (t, i)$ be the stack after executing one \texttt{reduce} on $\gamma (t, i)$. We assert that $\gamma_1 (t, i)=\gamma (t, i - 1)$. We can see this by comparing the in-order traversal of $(t, i)$ and $(t, i - 1)$. The only difference is how we process the subtree rooted at the $i$th node on the rightmost chain. When traversing $(t, i)$, we push a projected nonterminal and proceed to nodes in the subtree. When traversing $(t, i-1)$, we push the entire subtree and skip the nodes in it, which corresponds exactly to executing one \texttt{reduce} on $\gamma (t, i)$. Therefore, $\gamma_1 (t, i)=\gamma (t, i - 1)$.

We only need $k-1$ \texttt{reduce} actions to get a single tree from the stack $\gamma_1 (t, i)$, or equivalently, $\gamma (t, i - 1)$. Applying the induction hypothesis, we will get the tree $t$ by repeatedly applying \texttt{reduce} on $\gamma_1 (t, i)$. Therefore, we will also get the same $t$ by repeatedly applying \texttt{reduce} on $\gamma (t, i)$, i.e. $\varphi \circ \gamma (t, i) = (t, i)$.

Since $(t, i)$ is arbitrary, we have proved $\gamma$ to be the inverse mapping of $\varphi$, and $\varphi$ is thus bijective. 

\end{proof}

\begin{corollary}[Connections in state spaces]
$\varphi^{-1} \circ \xi: \mathcal{U}_{\textit{AJ}} \rightarrow \mathcal{U}_{\textit{ISR}}$ is an injective mapping from our state space to ISR's state space.
\end{corollary}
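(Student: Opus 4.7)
The plan is to assemble the corollary from two ingredients that are already in place: the bijectivity of $\varphi$ supplied by Theorem~\ref{thm:equiv}, and the injectivity of $\xi$ that is asserted (though not verified) in Definition~\ref{def:aug}. The corollary then reduces to the elementary fact that a composition of two injective maps is injective, so there is essentially no work beyond assembling the pieces.

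First I would briefly justify that $\xi$ really is injective. By its defining formula $\xi(t) = (t, L(t)-1)$, the partial tree $t$ appears verbatim as the first coordinate of $\xi(t)$; composing $\xi$ with the projection onto the first coordinate of $\mathcal{U}'$ recovers the identity on $\mathcal{U}_{\textit{AJ}}$. Hence $\xi(t_1) = \xi(t_2)$ forces $t_1 = t_2$, so $\xi$ is injective.

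Next I would invoke Theorem~\ref{thm:equiv}: the mapping $\varphi: \mathcal{U}_{\textit{ISR}} \to \mathcal{U}'$ is bijective, so the inverse $\varphi^{-1}: \mathcal{U}' \to \mathcal{U}_{\textit{ISR}}$ exists and is itself bijective, in particular injective. (In fact the proof of Theorem~\ref{thm:equiv} already exhibits $\gamma$ as the explicit inverse of $\varphi$, so nothing further is needed here.)

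Finally, the composition $\varphi^{-1} \circ \xi: \mathcal{U}_{\textit{AJ}} \to \mathcal{U}_{\textit{ISR}}$ is injective because the composition of injections is injective: if $\varphi^{-1}(\xi(t_1)) = \varphi^{-1}(\xi(t_2))$, then applying $\varphi$ to both sides yields $\xi(t_1) = \xi(t_2)$, and the injectivity of $\xi$ gives $t_1 = t_2$. There is no genuine obstacle in this proof; the only thing worth being careful about is making the short verification of $\xi$'s injectivity explicit so that the corollary does not silently rely on an unproved assertion in Definition~\ref{def:aug}.
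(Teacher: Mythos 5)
Your proposal is correct and follows essentially the same route as the paper, which likewise cites the injectivity of $\xi$ (from Definition~\ref{def:aug}) and the bijectivity of $\varphi$ (from Theorem~\ref{thm:equiv}) and composes them. The only difference is that you explicitly verify $\xi$'s injectivity via projection onto the first coordinate, a small but welcome addition.
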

\begin{proof}
It is straightforward given that $\xi: \mathcal{U}_{\textit{AJ}} \rightarrow \mathcal{U}'$ is injective (Definition~\ref{def:aug}) and $\varphi: \mathcal{U}_{\textit{ISR}} \rightarrow \mathcal{U}'$ is bijective (Theorem~\ref{thm:equiv}).
\end{proof}

The mapping $\varphi^{-1} \circ \xi$ bridges our state space and ISR's state space. Not only is it injective, but it also preserves actions---each action in our system can be mapped to a combination of actions in ISR. We prove this for \texttt{attach} actions (Lemma~\ref{thm:attach}) and \texttt{juxtapose} actions (Lemma~\ref{thm:juxtapose}) separately.
\begin{lemma}[Translating \texttt{attach} actions to ISR]
\label{thm:attach}
Let $t_1$ and $t_2$ be two partial trees without unary chains, i.e., $t_1, t_2 \in \mathcal{U}_{\textit{aj}}$. If \texttt{attach(i, X)} brings $t_1$ to $t_2$, The following action sequence in In-order Shift-reduce System will bring $\varphi^{-1} \circ \xi (t_1)$ to $\varphi^{-1} \circ \xi (t_2)$:
\begin{equation}
\underbrace{\texttt{reduce}, \dots, \texttt{reduce}}_{L(t_1) - i - 1},\quad \texttt{shift},\quad \underbrace{\texttt{PJ-X}}_{\textnormal{if } X \neq \texttt{None}},
\end{equation}
where $L(t_1)$ is the number of internal nodes on the rightmost chain of $t_1$. $X=\texttt{None}$ means the optional argument \textit{parent\_label} is not provided; in this case, we exclude \texttt{PJ-X}.
\end{lemma}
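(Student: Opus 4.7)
The plan is to unfold the explicit description of $\varphi^{-1}(t, j)$ that is embedded in the construction $\gamma$ used to prove Theorem~\ref{thm:equiv}, and then follow the proposed ISR action sequence one step at a time, checking after each step that the current stack is still $\varphi^{-1}$ of the expected augmented partial tree. Set $k = L(t_1)$ and label the internal nodes on the rightmost chain of $t_1$ as $n_0, \ldots, n_{k-1}$, with $n_{j+1}$ the rightmost child of $n_j$ for $j<k-1$, and write $c_{j,1}, \ldots, c_{j,m_j}$ for the children of $n_j$. Unrolling $\gamma$ shows that $\varphi^{-1}(t_1, k-1)$ is the stack whose bottom-to-top content, grouped by $j$, is the $n_j$-block ``(subtree $c_{j,1}$), projected nonterminal $n_j$, (subtree $c_{j,2}$), \ldots, (subtree $c_{j,m_j-1}$)'', with the rightmost leaf of $t_1$ appearing as the final element at the top of the $n_{k-1}$-block.

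Next I would prove an auxiliary reduce lemma: for $j \geq 0$, a single \texttt{reduce} applied to $\varphi^{-1}(t_1, j)$ yields $\varphi^{-1}(t_1, j-1)$. Indeed, the topmost projected nonterminal in $\varphi^{-1}(t_1, j)$ is the one for $n_j$; the elements above it are exactly the non-first children of $n_j$ (the last of which is either the already-assembled subtree rooted at $n_{j+1}$ when $j<k-1$, or the rightmost leaf when $j=k-1$); and the single element immediately below it is the first child $c_{j,1}$. The \texttt{reduce} therefore assembles the complete subtree rooted at $n_j$, which is precisely the difference between case~2 and case~1 of $\gamma$ at node $n_j$. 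Iterating this $L(t_1)-i-1$ times carries $\varphi^{-1}(t_1, k-1)$ to $\varphi^{-1}(t_1, i)$.

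For the last two actions I would compare the in-order traversals underlying $\gamma(t_1, i)$ and $\gamma(t_2, L(t_2)-1)$ locally at $n_i$, using that $t_1$ and $t_2$ agree outside the subtree rooted at $n_i$. When $X = \texttt{None}$, \texttt{attach} adds the new token as a leaf rightmost child of $n_i$; the rightmost chain does not extend beyond $n_i$, so $L(t_2) = i+1$ and $\xi(t_2) = (t_2, i)$. The only difference between the two traversals is that the new token is pushed at the end of the $n_i$-block, which is exactly what \texttt{shift} does to $\varphi^{-1}(t_1, i)$. When $X \neq \texttt{None}$, \texttt{attach} instead creates a new internal node $X$ (with the new token as its only child) as the rightmost child of $n_i$; the rightmost chain becomes $n_0, \ldots, n_i, X$ and $L(t_2) = i+2$, so $\xi(t_2) = (t_2, i+1)$, and $\gamma(t_2, i+1)$ further pushes the new token (the first child of the projected $X$) followed by the projected nonterminal for $X$. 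That is exactly the effect of \texttt{shift} followed by \texttt{PJ-X}.

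The main obstacle will be clean book-keeping: the in-order convention places the projected nonterminal of $n_j$ between the first and the remaining children of $n_j$, making the single element below it and the (possibly several) elements above it play asymmetric roles in \texttt{reduce}, and one must also track carefully which rightmost-chain nodes are still projected versus already assembled into a full subtree. Isolating the reduce lemma upfront, and then explicitly localizing the shift/\texttt{PJ-X} analysis to the $n_i$-block by invoking that $t_1$ and $t_2$ agree on the complement of the subtree rooted at $n_i$, should keep the indexing manageable and reduce the final comparison to a purely local check.
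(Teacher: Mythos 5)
Your proposal is correct and follows essentially the same route as the paper's proof: execute $L(t_1)-i-1$ \texttt{reduce} actions using the fact that a single \texttt{reduce} maps $\varphi^{-1}(t,j)$ to $\varphi^{-1}(t,j-1)$, then verify the effect of \texttt{shift} (and the optional \texttt{PJ-X}) by a local comparison of the in-order traversals at the target node, splitting on whether $X = \texttt{None}$. The only difference is that you re-derive the one-step reduce fact from an explicit bottom-to-top description of the stack, whereas the paper cites it directly from the proof of Theorem~\ref{thm:equiv}, where it was established as $\gamma_1(t,i)=\gamma(t,i-1)$.
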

\begin{proof}
We know $\varphi^{-1} \circ \xi(t_1) = \varphi^{-1}(t_1, L(t_1) - 1)$ (Definition~\ref{def:aug}).
Recall that in Theorem~\ref{thm:equiv} we have proved that executing one \texttt{reduce} on $\varphi^{-1} (t, i)$ gives us $\varphi^{-1} (t, i - 1)$. Therefore, executing \texttt{reduce} $L(t_1) - i - 1$ times on $\varphi^{-1}(t_1, L(t_1) - 1)$ gives us $\varphi^{-1}(t_1, i)$. We still have to execute one \texttt{shift} and one optional \texttt{PJ-X}.

When $X = \texttt{None}$, we only have to execute a \texttt{shift}. The resulting stack is $\varphi^{-1}(t_1, i)$ plus a new token at the top. We prove that the new stack equals to $\varphi^{-1}(t_2, L(t_2) - 1)$. Since $t_2$ is a result of executing \texttt{attach(i, None)} on $t_1$, we know $L(t_2) - 1 = i$ from the definition of the \texttt{attach} action. So, we only have to prove that the new stack equals to $\varphi^{-1}(t_2, i)$. We unfold the definition of $\varphi^{-1}$ (in the proof of Theorem~\ref{thm:equiv}) and compare the in-order traversal of $(t_2, i)$ and $(t_1, i)$. When visiting the subtree rooted at \textit{target\_node} $i$ in $t_2$, we have the new token as the rightmost child; it corresponds to the new token at the top of the stack. Therefore, $\varphi^{-1}(t_2, i)$ equals to $\varphi^{-1}(t_1, i)$ plus the new token. We have proved the new stack to be $\varphi^{-1}(t_2, L(t_2) - 1)$ and therefore it equals to $\varphi^{-1} \circ \xi (t_2)$ (Definition~\ref{def:aug}).

When $X \neq \texttt{None}$, we have to execute a \texttt{shift} and a \texttt{PJ-X}. The resulting stack is $\varphi^{-1}(t_1, i)$ plus a new token and a projected nonterminal $X$ at the top. Similarly, we want to prove the new stack to equal to $\varphi^{-1}(t_2, L(t_2) - 1)$. The reasoning is similar, by comparing the in-order traversal of $(t_1, i)$ and $(t_2, i)$. We thus omit the details.

Therefore, in ISR, we can arrive at the state $\varphi^{-1} \circ \xi (t_2)$ from the state $\varphi^{-1} \circ \xi (t_1)$ by executing $L(t_1) - i - 1$ \texttt{reduce} actions, one \texttt{shift} action and one optional \texttt{PJ-X} action.
\end{proof}
\begin{lemma}[Translating \texttt{juxtapose} actions to ISR]
\label{thm:juxtapose}
Let $t_1$ and $t_2$ be two partial trees without unary chains, i.e., $t_1, t_2 \in \mathcal{U}_{\textit{aj}}$. If \texttt{juxtapose(i, X, Y)} brings $t_1$ to $t_2$, The following action sequence in In-order Shift-reduce System will bring $\varphi^{-1} \circ \xi (t_1)$ to $\varphi^{-1} \circ \xi (t_2)$:
\begin{equation}
\underbrace{\texttt{reduce}, \dots, \texttt{reduce}}_{L(t_1) - i},\quad \texttt{PJ-Y},\quad \texttt{shift},\quad \underbrace{\texttt{PJ-X}}_{\textnormal{if } X \neq \texttt{None}}.
\end{equation}
\end{lemma}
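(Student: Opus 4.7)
The plan is to mirror the structure of Lemma~1's proof for \texttt{attach}, adapting the reasoning to the structural change induced by \texttt{juxtapose}. First I would invoke the key fact established inside the proof of Theorem~\ref{thm:equiv}, namely that one \texttt{reduce} takes $\varphi^{-1}(t, j)$ to $\varphi^{-1}(t, j-1)$. Applying this $L(t_1) - i$ times to $\varphi^{-1} \circ \xi(t_1) = \varphi^{-1}(t_1, L(t_1)-1)$ lands us at $\varphi^{-1}(t_1, i-1)$. By unfolding the in-order traversal used to define $\varphi^{-1}$, the topmost element of this stack is precisely the entire subtree rooted at $\textit{target\_node}$, because at depth $i$ the traversal rule falls into case (1) and pushes that subtree as a whole.

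Next I would argue that the remaining three actions PJ-Y, \texttt{shift}, and optionally PJ-X bring this intermediate stack to $\varphi^{-1} \circ \xi(t_2) = \varphi^{-1}(t_2, L(t_2)-1)$. The central claim is that $\varphi^{-1}(t_2, L(t_2)-1)$ differs from $\varphi^{-1}(t_1, i-1)$ exactly by (i) a projected nonterminal $Y$ sitting above the current top, (ii) the new token $w$ shifted onto the stack, and, when $X \neq \texttt{None}$, (iii) a projected nonterminal $X$ above $w$. Note that $L(t_2) = i+1$ when $X = \texttt{None}$ and $L(t_2) = i+2$ otherwise, so in both cases the traversal of $(t_2, L(t_2)-1)$ processes the entire rightmost chain of $t_2$ via option~(2).

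To prove the central claim I would expand both $\varphi^{-1}(t_1, i-1)$ and $\varphi^{-1}(t_2, L(t_2)-1)$ through their in-order traversals and compare stack prefixes. The rightmost-chain ancestors of $t_1$ and $t_2$ at depths $0, \dots, i-1$ are identical, together with all their non-rightmost subtrees, because \texttt{juxtapose} only modifies the tree at depth $i$ and below. Hence the two traversals produce identical stack contents up to the instant just before the depth-$i$ node is visited. At depth $i$, the traversal of $(t_1, i-1)$ halts after pushing the whole $\textit{target\_node}$ subtree under case~(1). The traversal of $(t_2, L(t_2)-1)$ instead enters case~(2): it first pushes the original $\textit{target\_node}$ subtree (which sits as $Y$'s leftmost child and is no longer on $t_2$'s rightmost chain), then pushes $Y$ as a projected nonterminal, then processes $Y$'s second child---either the leaf $w$ alone (when $X = \texttt{None}$), or the subtree $(X\ w)$, which by case~(2) at depth $i+1$ pushes $w$ and then projects $X$. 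The two traversals therefore differ exactly by the additions $(Y, w)$ or $(Y, w, X)$, matching the stack effect of PJ-Y, \texttt{shift}, and the optional PJ-X.

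The main obstacle I anticipate is being precise about how PJ-Y interacts with the stack top: it must be verified that PJ-Y wraps the current top element as the leftmost child of the newly projected $Y$, rather than placing $Y$ as a fresh independent marker. Once this bookkeeping is locked in via the in-order traversal unfolding, the equality between the stack obtained after the prescribed ISR action sequence and $\varphi^{-1}(t_2, L(t_2)-1)$ follows by direct term-by-term comparison, completing the translation and hence the lemma.
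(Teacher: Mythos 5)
Your proposal is correct and follows essentially the same route as the paper's proof: first use the fact (from the proof of Theorem~3) that each \texttt{reduce} decrements the marked index, reaching $\varphi^{-1}(t_1, i-1)$, and then verify by comparing in-order traversals of $(t_1, i-1)$ and $(t_2, L(t_2)-1)$ that \texttt{PJ-Y}, \texttt{shift}, and the optional \texttt{PJ-X} add exactly the projected nonterminal $Y$, the new token, and (if present) the projected nonterminal $X$. You in fact spell out the $X \neq \texttt{None}$ case (including the correct observation that $L(t_2) = i+2$ there) which the paper omits as ``similar,'' so the argument is, if anything, slightly more complete.
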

\begin{proof}
Similar to Lemma~\ref{thm:attach}, we first execute $L(t_1) - i$ \texttt{reduce} actions on $\varphi^{-1} \circ \xi (t_1) = \varphi^{-1} (t_1, L(t_1) - 1)$ to get $\varphi^{-1} (t_1, i - 1)$. We still have to execute one \texttt{PJ-Y}, one \texttt{shift} and one optional \texttt{PJ-X}.

When $X = \texttt{None}$, we only have to execute a \texttt{PJ-Y} and a \texttt{shift}. The resulting stack is $\varphi^{-1} (t_1, i - 1)$ plus a projected nonterminal $Y$ and a new token. We prove that the new stack equals to $\varphi^{-1} (t_2, L(t_2) - 1)$. Since $t_2$ is a result of executing \texttt{juxtapose(i, None, Y)} on $t_1$, we know $L(t_2) - 1 = i$ from the definition of the \texttt{juxtapose} action. So, we only need the new stack to equal to $\varphi^{-1} (t_2, i)$, which can be proved by comparing the in-order traversal of $(t_2, i)$ and $(t_1, i - 1)$: In $t_2$, the subtree rooted at $Y$ has 2 children; the left child corresponds to the $i$th subtree on the rightmost chain of $t_1$, whereas the right child is a single leaf. When we reach $Y$ in the in-order traversal of $t_2$, we first push its left subtree onto the stack. Now the stack equals to $\varphi^{-1}(t_1, i - 1)$. Then, we visit the node $Y$ and its right child, which push the additional projected nonterminal $Y$ and a new token onto the stack. Therefore, $\varphi^{-1} (t_2, i)$ is the new stack.

When $X \neq \texttt{None}$, we have to execute a \texttt{PJ-Y}, a \texttt{shift}, and a \texttt{PJ-X}. The derivation is similar, so we omit the details.
\end{proof}

\begin{theorem}[Connection in actions]
\label{thm:transition}
Let $t_1$ and $t_2$ be two partial trees without unary chains, i.e., $t_1, t_2 \in \mathcal{U}_{\textit{aj}}$. If $a$ is an attach-juxtapose action that brings $t_1$ to $t_2$, there must exist a sequence of actions in In-order Shift-reduce System that brings $\varphi^{-1} \circ \xi (t_1)$ to $\varphi^{-1} \circ \xi (t_2)$.
\end{theorem}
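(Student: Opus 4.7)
The plan is to derive Theorem~\ref{thm:transition} as an immediate corollary of Lemma~\ref{thm:attach} and Lemma~\ref{thm:juxtapose}, which already translate each of the two primitive actions in the attach-juxtapose system into an explicit sequence of ISR actions with the correct endpoints. Since by definition every action $a$ in the attach-juxtapose system is either an \texttt{attach} or a \texttt{juxtapose}, a simple case split exhausts all possibilities.

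Concretely, I would first observe that both $\xi$ and $\varphi$ are well-defined on $t_1$ and $t_2$ (both lie in $\mathcal{U}_{\textit{AJ}}$, so $\xi$ applies, and by Theorem~\ref{thm:equiv} $\varphi^{-1}$ maps into $\mathcal{U}_{\textit{ISR}}$). Then I would split on the form of $a$. If $a = \texttt{attach}(i, X)$ for some target index $i$ and optional label $X$, Lemma~\ref{thm:attach} yields the explicit sequence
\[
\underbrace{\texttt{reduce}, \dots, \texttt{reduce}}_{L(t_1) - i - 1},\quad \texttt{shift},\quad \underbrace{\texttt{PJ-X}}_{\textnormal{if } X \neq \texttt{None}},
\]
which drives $\varphi^{-1} \circ \xi(t_1)$ to $\varphi^{-1} \circ \xi(t_2)$. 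If instead $a = \texttt{juxtapose}(i, X, Y)$, Lemma~\ref{thm:juxtapose} supplies the analogous sequence
\[
\underbrace{\texttt{reduce}, \dots, \texttt{reduce}}_{L(t_1) - i},\quad \texttt{PJ-Y},\quad \texttt{shift},\quad \underbrace{\texttt{PJ-X}}_{\textnormal{if } X \neq \texttt{None}},
\]
again driving $\varphi^{-1} \circ \xi(t_1)$ to $\varphi^{-1} \circ \xi(t_2)$. In either case, the existence of a translating sequence is established.

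There is essentially no obstacle at this level: all the real work has been front-loaded into the two lemmas, whose proofs in turn lean on the constructive description of $\varphi^{-1}$ from Theorem~\ref{thm:equiv} (in particular, the fact that one \texttt{reduce} on $\varphi^{-1}(t, i)$ yields $\varphi^{-1}(t, i-1)$, and that \texttt{shift}/\texttt{PJ-X} correspond to appending a leaf or a projected nonterminal on the rightmost chain). The only point worth double-checking is that the indexing conventions line up — specifically, that after the attach action the new rightmost-chain length satisfies $L(t_2) - 1 = i$ when $X = \texttt{None}$ and the analogous identity when $X \neq \texttt{None}$, and similarly for juxtapose — but this is already handled inside the two lemmas. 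Hence the proof reduces to a two-line case analysis invoking Lemmas~\ref{thm:attach} and~\ref{thm:juxtapose}.
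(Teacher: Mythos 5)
Your proposal is correct and matches the paper's own proof, which likewise derives the theorem as an immediate consequence of Lemma~\ref{thm:attach} and Lemma~\ref{thm:juxtapose} via a case split on whether $a$ is an \texttt{attach} or a \texttt{juxtapose}. You simply spell out the two cases and the explicit ISR action sequences more fully than the paper's one-line proof does.
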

\begin{proof}
It follows straightforwardly from Lemma~\ref{thm:attach} and Lemma~\ref{thm:juxtapose}.
\end{proof}

\begin{corollary}[Connection in action sequences]
Let $t_1$ and $t_2$ be two partial trees without unary chains, i.e., $t_1, t_2 \in \mathcal{U}_{\textit{aj}}$. If there exists a sequence of attach-juxtapose actions that brings $t_1$ to $t_2$, there must exist a sequence of actions in In-order Shift-reduce System that brings $\varphi^{-1} \circ \xi (t_1)$ to $\varphi^{-1} \circ \xi (t_2)$.
\end{corollary}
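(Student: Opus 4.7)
The plan is to prove this corollary by straightforward induction on the length $k$ of the attach-juxtapose action sequence that brings $t_1$ to $t_2$, using Theorem~\ref{thm:transition} as the per-step translation device. This is essentially the observation that concatenating the per-action ISR translations produces a valid ISR sequence with the correct endpoints, so the main work is bookkeeping plus verifying that all intermediate states lie in $\mathcal{U}_{\textit{AJ}}$ so that Theorem~\ref{thm:transition} applies at each step.

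First I would set up the induction. For the base case $k = 0$, we have $t_1 = t_2$, and the empty action sequence in ISR trivially brings $\varphi^{-1} \circ \xi(t_1)$ to itself. For the inductive step, suppose the attach-juxtapose sequence is $a_0, a_1, \dots, a_{k-1}$, and let $t'$ denote the partial tree reached after executing $a_0, \dots, a_{k-2}$ on $t_1$. Applying the inductive hypothesis to the length-$(k-1)$ prefix yields an ISR sequence $\pi_1$ that brings $\varphi^{-1} \circ \xi(t_1)$ to $\varphi^{-1} \circ \xi(t')$. Then applying Theorem~\ref{thm:transition} to the single action $a_{k-1}$ (which brings $t'$ to $t_2$) yields an ISR sequence $\pi_2$ that brings $\varphi^{-1} \circ \xi(t')$ to $\varphi^{-1} \circ \xi(t_2)$. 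Concatenating $\pi_1$ followed by $\pi_2$ gives the desired ISR sequence from $\varphi^{-1} \circ \xi(t_1)$ to $\varphi^{-1} \circ \xi(t_2)$.

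The main (and really the only) obstacle is justifying that every intermediate tree $t'$ belongs to $\mathcal{U}_{\textit{AJ}}$, since Theorem~\ref{thm:transition} is stated only for trees without unary chains. I would discharge this by a short auxiliary observation: if $t$ has no unary chains, then neither $\texttt{attach}(i, X)$ nor $\texttt{juxtapose}(i, X, Y)$ applied to $t$ can create one. For \texttt{attach}, the new leaf is added as the rightmost child of either \texttt{target\_node} or a fresh \texttt{parent\_label} node, and in both sub-cases the affected internal nodes end up with at least two children (using that \texttt{target\_node} already had children in the non-degenerate case, and handling the initial $t = \texttt{empty\_tree}$ degenerate case separately). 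For \texttt{juxtapose}, the freshly inserted \texttt{new\_label} node has exactly two children by construction (namely the old \texttt{target\_node} and the new-token subtree), and the rest of the tree is unchanged. Hence $t' \in \mathcal{U}_{\textit{AJ}}$ at every step, the hypotheses of Theorem~\ref{thm:transition} are satisfied throughout, and the induction closes.
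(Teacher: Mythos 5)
Your proof is correct and takes the same route as the paper, which likewise proves the corollary by induction on the number of actions using Theorem~\ref{thm:transition} (and omits the details). Your additional check that \texttt{attach} and \texttt{juxtapose} preserve the no-unary-chain property, so every intermediate tree stays in $\mathcal{U}_{\textit{AJ}}$ and the theorem applies at each step, is a legitimate point the paper leaves implicit, and your argument for it is sound.
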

\begin{proof}
It is straightforward to prove using Theorem~\ref{thm:transition} and induction on the number of actions for bringing $t_1$ to $t_2$. Details are omitted. 
\end{proof}

\section*{Appendix C: Separating Content and Position Features in GCN Layers}
\label{appendix:gcn}

We use the encoder in Kitaev~and~Klein~\cite{kitaev2018constituency}. They showed that separating content and position features improves parsing performance. Specifically, the input token features to the encoder are the concatenation of content features (e.g., from BERT~\cite{devlin2019bert} or XLNet~\cite{yang2019xlnet}) and position features from a learnable position embedding matrix $P$. Kitaev~and~Klein~\cite{kitaev2018constituency} proposed a variant of self-attention that processes two types of features independently. As a result, the output token features are also the concatenation of content and position.

We extend this idea to GCN layers. Vanilla GCN layers perform a linear transformation on the input node feature (i.e. $y = \Theta x + b$) before normalizing and aggregating the neighbors\footnote{\url{https://pytorch-geometric.readthedocs.io/en/latest/notes/create_gnn.html}}. We assume the node features to be concatenation of content and position: $x = [x_c, x_p]$, and perform linear transformations for them separately: $y = [y_c, y_p] = [\Theta_c x + b_c, \Theta_p x + b_p]$. As a result, the node features at each GCN layer are also concatenation of content and position. As stated in the main paper, we merge the two types of features when generating attention weights $w$ for nodes on the rightmost chain.

To study the effect of the separation, we present an ablation experiment. We compare our model with vanilla GCN layers and with GCN layers that separate content and position. We also scale the feature dimensions so that both models have approximately the same number of parameters. Results are summarized in Table~\ref{table:content_position}. We run each experiment 5 times and report the mean and its standard error (SEM). For PTB, we use XLNet as the pre-trained embeddings. Results show that separating content and position improves performance in all settings, which is consistent with prior work~\cite{kitaev2018constituency}.

\begin{table}[ht]
  \caption{Ablation study separating content and positions.}
  \label{table:content_position}
  \centering
  \begin{tabular}{@{}llllllll@{}}
    \toprule
    GCN layer & \multicolumn{2}{c}{PTB} & \multicolumn{2}{c}{CTB} \\
    \cmidrule(r){2-3} \cmidrule(r){4-5} \cmidrule(r){6-7} 
    & EM & F1 & EM & F1 \\
    \midrule
    Vanilla GCN~\cite{kipf2016semi} & 58.90 $\pm$ 0.23 & 96.25 $\pm$ 0.06 & 49.54 $\pm$ 0.65 & 93.49 $\pm$ 0.27 \\
    Separating content and position & \textbf{59.17} $\pm$ 0.33 & \textbf{96.34} $\pm$ 0.03  & \textbf{49.72} $\pm$ 0.83 & \textbf{93.59} $\pm$ 0.26  \\
    \bottomrule
  \end{tabular}
\end{table}

\section*{Appendix D: Error Categorization using Berkeley Parser Analyzer}
\label{appendix:berkeley}

We use Berkeley Parser Analyzer~\cite{kummerfeld2012parser} to categorize the errors of Mrini et al.~\cite{mrini2019rethinking} and our model (with XLNet) on PTB. Results are shown in Table~\ref{table:berkeley_parser_analyzer}. Two methods have the same relative ordering of error categories. The 3 most frequent categories are ``PP Attachment'', ``Single Word Phrase'', and ``Unary''. Compared to Mrini et al., our method has more ``PP Attachment'' (342 vs. 320) and ``UNSET move'' (33 vs. 23), but fewer ``Clause Attachment'' (110 vs. 122) and ``XoverX Unary'' (48 vs. 56).

\begin{table}[ht]
  \caption{Categorization of parsing errors made by Mrini et al.~\cite{mrini2019rethinking} and our model (with XLNet). For each category, we show its occurrence and the number of brackets attributed to it.}
  \label{table:berkeley_parser_analyzer}
  \centering
  \begin{tabular}{@{}lllll@{}}
  \toprule
  Error category & \multicolumn{2}{c}{Mrini et al.~\cite{mrini2019rethinking}} & \multicolumn{2}{c}{Ours (XLNet)} \\
  \cmidrule(r){2-3} \cmidrule(r){4-5}  
    & \#Errors & \#Brackets & \#Errors & \#Brackets \\
    \midrule
    PP Attachment & \textbf{320} & 746 & 342 & 804 \\
    Single Word Phrase & 267 & 325 & \textbf{259} & 324 \\ 
    Unary & 237 & 237 & \textbf{235} & 235 \\
    NP Internal Structure & \textbf{200} & 230 & 202 & 236 \\
    Different label & \textbf{197} & 394 & \textbf{197} & 394 \\
    Modifier Attachment & 137 & 251 & \textbf{131} & 283 \\
    Clause Attachment & 122 & 398 & \textbf{110} & 334 \\
    UNSET add & \textbf{85} & 85 & 88 & 88 \\
    UNSET remove & 78 & 78 & \textbf{71} & 71 \\
    Co-ordination & 68 & 138 & \textbf{65} & 120 \\
    XoverX Unary & 56 & 56 & \textbf{48} & 48 \\
    NP Attachment & 44 & 160 & \textbf{39} & 135 \\
    UNSET move & \textbf{23} & 62 & 33 & 113 \\
    VP Attachment & \textbf{12} & 37 &  17 & 56 \\
    \bottomrule
  \end{tabular}
\end{table}

\end{document}